\newtheorem{theorem}{Theorem}
\newtheorem{lemma}{Lemma}
\newtheorem{claim}{Claim}
\theoremstyle{definition}
\newtheorem{definition}{Definition}
\newtheorem{remark}[definition]{Remark}
\newcommand{\cA}{\mathcal{A}}
\newcommand{\utilityT}[1]{X_{#1}}
\newcommand{\As}{\mathcal{A}^*}
\newcommand{\Ber}[1]{\mathrm{Ber}\left( #1 \right)}
\newcommand{\Expec}[1]{\mathbb{E}\left[ #1 \right]}
\newcommand{\Eunder}[2]{\mathbb{E}_{#1}\left[ #2 \right]}
\newcommand{\Pundertwo}[2]{\mathbb{P}_{#1}\left[#2\right]}
\newcommand{\ot}{\leftarrow}
\newcommand{\OPT}{\mathrm{OPT}}
\newcommand{\ALG}{\mathrm{ALG}}
\renewcommand{\epsilon}{\varepsilon}
\DeclareMathOperator*{\argmin}{arg\,min}
\newcommand{\rad}{\mathrm{r}}
\newcommand{\Const}{C_{\rm rad}}
\newcommand{\ALGucb}{\overline{\mathrm{ALG}'}}
\newcommand{\clean}{\mathcal{E}}
\title{Bandit Max-Min Fair Allocation}
\author[1]{Tsubasa Harada}
\affil{Institute of Science Tokyo}
\author[2]{Shinji Ito}
\affil{University of Tokyo}
\author[1]{Hanna Sumita}
\date{}
\begin{document}

\maketitle

\begin{abstract}
    In this paper, we study a new decision-making problem called the \textit{bandit max-min fair allocation} (BMMFA) problem. The goal of this problem is to maximize the minimum utility among agents with additive valuations by repeatedly assigning indivisible goods to them.
    One key feature of this problem is that each agent's valuation for each item can only be observed through the semi-bandit feedback, while existing work supposes that the item values are provided at the beginning of each round.
    Another key feature is that the algorithm's reward function is not additive with respect to rounds, unlike most bandit-setting problems.
    Our first contribution is to propose an algorithm that has an asymptotic regret bound of $O(m\sqrt{T}\ln T/n + m\sqrt{T \ln(mnT)})$, where $n$ is the number of agents, $m$ is the number of items, and $T$ is the time horizon. 
    This is based on a novel combination of bandit techniques and a resource allocation algorithm studied in the literature on competitive analysis.
    Our second contribution is to provide the regret lower bound of $\Omega(m\sqrt{T}/n)$.
    When $T$ is sufficiently larger than $n$, the gap between the upper and lower bounds is a logarithmic factor of $T$.
\end{abstract}

\section{Introduction}
\label{sec-intro}

In this paper, we introduce a new sequential decision-making problem, the \textit{bandit max-min fair allocation} (BMMFA) problem, in which some indivisible goods are divided among some agents in a fair manner.
The problem is motivated by a problem of designing a subscription service as follows:
the company rents items (e.g., clothes, watches, cars, etc.) to users for a certain period, collects the items when the period ends, receives feedback from users, and, based on that feedback, decides which items to rent to whom in the next period. In such a service, the company would like to make all the users as happy as possible.
How can we ensure such a fair allocation?

This problem can be regarded as an online variant of the fair allocation problem, which has been a central problem in algorithmic game theory.
The classical settings of the fair allocation problem~\cite{Bouveret2016} assume that the valuation of each agent for items is known in advance. However, this is not necessarily the case in practice.
In the above subscription service, even agents may not recognize their own valuations until they receive items. 
Therefore, this paper aims to maximize the agents' utilities while learning the valuations of agents through repeatedly allocating items.

We briefly introduce the BMMFA problem.
Let $[n] \coloneqq \{1,\dots,n\}$ be a set of $n$ agents with additive valuations, $M$ be a set of $m$ items and $T$ be the time horizon.
The value of each agent $i\in [n]$ for each item $e\in M$ follows an unknown distribution $D_{ie}$ over $[0,1]$ with the expected value $\mu_{ie}$.
For each round $t=1,\ldots, T$,
the value $v^t_{ie}$ of agent $i$ with respect to an item $e$ is sampled from $D_{ie}$ independently of the round $t$.
We denote by a matrix $a\in \{0,1\}^{n \times m}$ an allocation of items to agents, where $a_{ie}=1$ if and only if agent $i$ receives item $e$.
In each round, the algorithm decides an allocation $a^t$ of $M$ based only on the past feedback, and observes values $v^t_{ie}$ only for $(i,e)$ such that $a^t_{ie}=1$.
The utility of agent $i$ obtained at round $t$ (denoted by $X^t_i$) is the sum of the values for items which are allocated to agent $i$,
i.e., $X^t_i\coloneqq\sum_{e\in M}v^t_{ie}a^t_{ie}$. 
The utility of agent $i$ at the end of round $T$ is $\utilityT{i} \coloneqq \sum_{t=1}^T X^t_i$.

As a fairness notion, we adopt the \emph{max-min fairness}, 
which is a prominent notion in the fair allocation literature~\cite{Golovin2005,AS2010,Bouveret2016}.
Then, the sequence of allocation $a^1,\dots, a^T$ is said to be fair if the \textit{egalitarian social welfare}, which is the smallest cumulative utility among agents
$\min_{i\in [n]} \utilityT{i}$,
is maximized.

The performance of the algorithm is evaluated by an expected regret $R_T$, which is the expected difference between the egalitarian social welfare of an optimal policy and that of the algorithm.
We will detail the definition later.
We have two features in the definition of the regret compared with most other bandit problems: (a) an optimal policy knows all the expected values $\mu_{ie} = \Expec{v^t_{ie}}$ for all $(i,e)\in [n] \times M$ but may make different allocations across the $T$ rounds, and (b) an algorithm's expected reward is $\Expec{ \min_{i\in [n]} X_i}$, which is \emph{not additive} with respect to rounds.

To be more specific about (a), we assume that an optimal policy chooses a sequence of allocations $x^1, \dots, x^T$ such that $\min_{i \in [n]} \sum_{t=1}^T\sum_{e\in M} \mu_{ie} x^t_{ie}$ is maximized.
A naive definition of an optimal policy would be an allocation $\tilde{x}$ that maximizes $\min_{i \in [n]} \sum_{t=1}^T\sum_{e\in M} \mu_{ie} \tilde{x}_{ie}$.
However, this fixed-allocation policy may not be reasonable for our problem.
To see this issue, consider the case where $m<n$ and all agents have value $1$ for any items.
Any fixed-allocation policy has zero egalitarian social welfare since at least one agent receives nothing in every round, while we can achieve positive value by allocating items depending on the round.

For the point (b), $\min_i X_i$ is the fairness measure to be maximized.
The problem is that analyzing $\Expec{ \min_{i\in [n]} X_i}$ is difficult if we naively use the existing bandit techniques.
Our model is similar to the combinatorial multi-armed bandit (CMAB) problems \cite{pmlr-v28-chen13a}.
However, even in the most general setting of CMAB, the algorithm's reward is the sum of the per-round rewards ($\min_{i\in [n]} X^t_i$ in our setting), and the optimal policy selects a fixed action for all rounds. This implies that the CMAB framework does not cover our setting. 

Another similar allocation problem is studied in the context of competitive analysis~\cite{DJSW2011resourceAllocation,kawase2022online,SHPK2022}.
Roughly speaking, in each round, one item arrives and agents reveals the values for the item, and then the algorithm decides who to receive the item so that the overall egalitarian social welfare is maximized.
However, those resource allocation problems assume that the \emph{full} information $(v^t_{ie})_{i\in [n],e\in M}$
are given at the \emph{beginning} of each round,
whereas only semi-bandit feedback $(v^t_{ie})_{i,e: a^t_{ie}=1}$
is given at the end of each round in the BMMFA problem.
An optimal policy is assumed to know the realization of all the item values in advance, and the performance metrics are defined differently (see also Appendix~\ref{sec:competitive ratio} for the detail).
Therefore, the existing results do not carry to our setting.

\subsection{Our Contributions}
\label{subsec-results}

In this paper, we first define a regret that is suitable for the BMMFA problem.
Next, we propose an algorithm of this problem that achieves a regret bound of ${O}(m\sqrt{T}\ln T/n + m \sqrt{T\ln (mnT)})$ when $T$ is sufficiently large.
In addition, we provide a lower bound of $\Omega(m\sqrt{T}/n)$ on the regret.
The gap between these bounds is $O(\max\{\ln T, n\sqrt{\ln (mnT)}\})$, which is a logarithmic factor of $T$.
We highlight the techniques below.

\paragraph{Upper Bound}
Due to the features of our regret definition, it is hard to naively apply the existing approaches.
We propose an algorithm by combining techniques of regret analysis and competitive analysis.
For this, we employ a similar idea to the resource allocation algorithm proposed in \cite{DJSW2011resourceAllocation} in the context of competitive analysis.
This is similar to the multiplicative weight updated method \cite{arora2012multiplicative}.
To estimate the item values given by the semi-bandit feedback, 
we incorporate upper confidence bounds (UCB)~\cite{LAI19854} on $\mu_{ie}$ for each $(i,e)$, and adopt the error analysis used in~\cite{BKS2018} for the bandits with knapsacks problem.
Our algorithm simply allocates each item to an agent with the largest UCB, discounted by a factor depending on the past allocations.
However, the regret analysis is challenging.
If we directly analyze the regret, we need to connect the algorithm's choice (depending on UCBs) to the algorithm's reward (in terms of $v^t_{ie}$'s).
This is not easy because the reward is non-additive and UCBs do not imply future item values.
We bypass this issue by introducing a \emph{surrogate} regret, defined with expected item values.
We show that the original regret and the surrogate differ by at most $O(m\sqrt{T \ln T})$, and the surrogate regret has a bound of ${O}(m\sqrt{T}\ln T/n + m \sqrt{T\ln (mnT)} + m\ln T \ln(mnT))$.
These facts imply an upper bound on $R_T$.
We remark that our algorithm runs in $O(mn)$ time per round.

By this analysis, the average egalitarian social welfare $\Expec{\min_{i\in [n]} X_i/T}$ of our algorithm achieves per-round fairness up to an additive error of $o(1)$ when $m$ is fixed.
Here, we refer to per-round fairness\footnotemark as maximizing the expected minimum utility per round through a stochastic allocation. 
\footnotetext{Alternative choices are maximizing the minimum expected ex-ante utility $\min_{i\in [n]} \Expec{X_i}$ or using only deterministic allocations. In fact, the same guarantee holds for any choice.}

\paragraph{Lower Bound}
The proof of the lower bound primarily follows the standard method for the multi-armed bandit (MAB) problem by \cite{ACFS2002}. We first lower bound the regret by averaging over a certain class of instances for BMMFA. Then, by using Pinsker's inequality, we reduce the problem of lower bounding the regret to computing the Kullback-Leibler divergence of certain distributions. The main difference from the standard method for MAB is that we ``divide'' the problem into 
$m/n$ subproblems, each with $n$ agents and $n$ items, by treating $m/n$ as an integer.
Intuitively, the lower bound $\Omega(m\sqrt{T}/n)$ arises from the number of subproblems times the lower bound of $\Omega(\sqrt{T})$ for each subproblem. This idea of dividing the problem is similar to the proof of the lower bound for the online combinatorial optimization problem \cite{audibert2014regret}.

\subsection{Relation to Multi-Player Bandits}
\label{subsec-relation-mpb}

The situation of multiple agents choosing items has been actively studied in the context of multi-player bandits (MPB). In this problem, $n$ agents repeatedly choose one of $K$ items (or arms). In the following, we explain the difference between MPB and BMMFA from three perspectives.

The first difference is the correspondence between agents and items: in MPB, each agent chooses exactly one item per round, and there may be items that are not chosen by any agent. In the BMMFA problem, on the other hand, each item is assigned to an agent, and there may be agents who receive no items or multiple items.

The second difference is the objective function: most MPB studies aim to maximize the sum of agents' utilities and do not consider fairness among the agents.
See a survey \cite{JMLR-survey-multiplayer-bandits} for details.
However, some recent studies address the fairness issues~\cite{hossain2021fair,Jones_Nguyen_Nguyen_2023,NEURIPS2024_no_regret,bistritz2020my}. These studies aim to maximize an objective function of the form $\sum_{t=1}^T F((X_i^t)_{i\in [n]})$, where $F((X_i^t)_{i\in [n]})$ represents a fairness measure at round $t$ (e.g. \emph{Nash social welfare}~\cite{NEURIPS2024_no_regret} or the max-min fairness~\cite{bistritz2020my}).
With such objective functions, the algorithm might prioritize per-round fairness rather than overall fairness.

In fact, this approach can hinder the achievement of overall fairness because the algorithm lacks an incentive to eliminate the disparity in cumulative utility among agents\footnotemark.
On the other hand, in our setting, even if a disparity in utility occurs during the learning process, the algorithm adaptively allocates items to make an agent with small utility happier.
\footnotetext{Consider an instance with two agents and two goods $a$ and $b$. The value of $a$ is $1$ and that of $b$ is $\varepsilon \ll 1$ for both agents. Any sequence of allocations that gives one item for one agent maximizes $\sum_{t=1}^T \min_{i\in [2]} X^t_i$. However, to maximize the minimum of cumulative utilities, we need to assign $a$ to either agent once per two rounds.}

The third perspective involves the differences in the ``optimal'' policy used as a benchmark for evaluating regret. In the context of bandit problems, including prior studies addressing fairness among agents such as \cite{hossain2021fair,bistritz2020my}, the optimal policy typically consists of repeatedly making a single fixed decision. 
In contrast, BMMFA allows the optimal policy to vary its allocation in each round. In other words, we assume a stronger optimal policy compared to those in similar problems.

These distinctions make it impossible to directly compare the challenges of BMMFA with that of related problems.

\subsection{Other Related Work}
In the MAB problem, there are $K$ arms, and the algorithm chooses one arm in each round and receives a reward corresponding to the chosen arm. In recent years, there has been research into how to choose an arm that satisfies a certain constraint representing fairness. A commonly used constraint for fairness is that ``the ratio of the number of rounds each arm has been drawn to the number of rounds must be greater than a certain value''
\cite{li2019combinatorial,claure2020multi,chen2020fair,patil2021achieving}.
In the BMMFA problem, we can view an allocation as an arm.
However, as the above notion ignores the utility of agents, it is not suitable for our purpose.

There is a vast body of literature on online fair allocation in combinatorial optimization and algorithmic game theory.
Recent studies include problems with a fairness notion such as envy-freeness~\cite{Benade2018},
maximum Nash social welfare~\cite{Banerjee2022}, $p$-mean welfare~\cite{BKM2021,Cohen2024}.
They are just a few examples; see also a survey~\cite{AW2020}.
Offline sequential allocation problems have also been studied~\cite{IgarashiLNN2024,KarlJochenA2024}.
In this context, the goal is to obtain a sequence of allocations with both overall and per-round fairness guarantees.

The one-shot, offline version of the BMMFA problem has been studied in combinatorial optimization under the name of the \textit{Santa Clause problem}~\cite{Golovin2005,bezakov2005allocating,chakrabarty2009onallocating,feige2008onallocations,HaeuplerSS11}.
The problem is NP-hard even to approximate within a factor of better than 1/2 \cite{lenstra1990approximation}.
\cite{BansalS06} proposed an $\Omega(\ln \ln \ln n/ \ln \ln n)$-approximation algorithm for a restricted case. 
\cite{AS2010} provided the
first polynomial-time approximation algorithm for the general problem
and this was improved by \cite{HaeuplerSS11}.

Finally, we note that BMMFA can also be viewed as a repeated two-player zero-sum game~\cite{Cesa-Bianchi_Lugosi_2006}, where one player selects an allocation to maximize the agents' utilities, while the other player selects an ``unhappy" agent to minimize the utility.
See Appendix~\ref{sec:learning in games} for a detail.

\section{Model}
\label{sec-setup}

The bandit max-min fair allocation problem is represented by a quadruple $([n],M,T,(D_{ie})_{i\in [n],e\in M})$, where $[n]\coloneqq \{1,\ldots,n\}$ is a set of $n$ agents, $M = \{1,\dots,m\}$ is a set of $m$ items, $T$ is the time horizon, and $D_{ie}$ is a probability distribution over $[0,1]$ representing the value of agent $i$ for an item $e$.
For each $i\in [n]$ and $e\in M$, let $\mu_{ie}$ be the expected value of $D_{ie}$. Assume that $[n]$, $M$ and $T$ are known in advance, while $(D_{ie})_{i\in [n],e\in M}$ is not.

Each allocation of items to agents is expressed as an $n$-row by $m$-column $0$-$1$ matrix $a\in\{0,1\}^{n \times m}$, where
$a_{ie}=1$ if and only if agent $i$ receives item $e$ in the allocation.
Let $\cA\subseteq\{0,1\}^{n \times m}$ be a set of allocations, i.e.,
\[
    \cA = \left\{ a\in\{0,1\}^{n \times m} \, : \, \sum_{i\in [n]} a_{ie} = 1 \text{ for all } e\in M \right\}.
\]

For each round $t=1,\ldots, T$,
let $v^t_{ie}$ be a random variable drawn from $D_{ie}$.
Note that the random variables $\{v^t_{ie}:i\in[n],e\in M,t=1,\ldots,T\}$ are mutually independent and are unknown to the algorithm in this step.
In round $t$, the algorithm chooses an allocation $a^t\in\cA$ depending only on
the previous allocations $(a^s)_{s=1}^{t-1}$ and the feedback obtained up to the beginning of round $t$.
Then, the algorithm receives semi-bandit feedback: the algorithm is given the values $v^t_{ie}$ for all $(i,e)$ such that $a^t_{ie}=1$.
The reward of an algorithm, denoted by $\ALG$, is defined as the egalitarian social welfare, i.e.,
\[
   \ALG \coloneqq \min_{i\in [n]} \sum_{t=1}^T \sum_{e\in M} v^t_{ie}a^t_{ie}.
\]

The expected regret $R_T$ is defined to be the expectation of the difference between the egalitarian social welfares of an optimal policy and an algorithm. 
We assume that an optimal policy takes a sequence of allocations $x^1, \dots, x^T \in \{0,1\}^{n \times m}$ that maximizes the egalitarian social welfare with respect to the expected values, i.e., $\min_{i\in [n]} \sum_{t=1}^T \sum_{e\in M} \mu_{ie} x^t_{ie}$.
Formally, we define
\begin{align*}
    \OPT& \coloneqq \min_{i\in [n]} \sum_{t=1}^T \sum_{e\in M} v^t_{ie} x^t_{ie}, \\
    R_T &\coloneqq \Expec{\OPT-\ALG}.
\end{align*}

For the regret analysis, we introduce surrogate values of $\OPT$ and $\ALG$ as
\begin{align*}
    \OPT_\mu & \coloneqq \min_{i\in [n]} \sum_{t=1}^T \sum_{e\in M} \mu_{ie}x^t_{ie}, \\
    \ALG_\mu & \coloneqq \min_{i\in [n]} \sum_{t=1}^T \sum_{e\in M} \mu_{ie}a^t_{ie}
\end{align*}
and a \emph{surrogate} regret
\begin{align*}
    R_T^\mu \coloneqq \Expec{\OPT_\mu- \ALG_\mu}.
\end{align*}
In fact, $R_T^\mu$ is not so far from $R_T$ as the following lemma shows.
\begin{lemma}\label{lem:surrogate regret}
    $|R_T - R^\mu_T| = O(m\sqrt{T\ln T})$.
\end{lemma}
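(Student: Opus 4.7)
The plan is to bound the two sides of the regret separately via the triangle inequality,
$$|R_T - R_T^\mu| \le \mathbb{E}[|\OPT - \OPT_\mu|] + \mathbb{E}[|\ALG - \ALG_\mu|],$$
and to use the elementary fact $|\min_i A_i - \min_i B_i| \le \max_{i\in[n]}|A_i - B_i|$ to reduce each side to a per-agent concentration inequality.

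For the $\OPT$ side, I would exploit the key observation that the optimal allocations $x^1,\ldots,x^T$ depend only on the deterministic quantities $(\mu_{ie})$, $n$, $m$, $T$, and hence are themselves non-random. Consequently, for each fixed agent $i$, the random variable
$$Z_i^{\mathrm{opt}} \;:=\; \sum_{t=1}^T\sum_{e\in M}(v^t_{ie}-\mu_{ie})\,x^t_{ie}$$
is a sum of at most $mT$ independent, zero-mean, $[-1,1]$-bounded random variables. Hoeffding's inequality applied to $Z_i^{\mathrm{opt}}$, combined with a union bound over $i\in[n]$ and the trivial deterministic envelope $|\OPT-\OPT_\mu|\le mT$, yields $\mathbb{E}[|\OPT-\OPT_\mu|] \le \mathbb{E}[\max_i |Z_i^{\mathrm{opt}}|] = O(\sqrt{mT\ln(nT)})$.

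For the $\ALG$ side, the allocation $a^t$ is chosen based on past feedback, so independence across rounds fails. I would instead form the martingale-difference sequence
$$Y_{i,t} \;:=\; \sum_{e\in M}(v^t_{ie}-\mu_{ie})\,a^t_{ie}$$
with respect to the natural filtration $\mathcal{F}_{t-1}$ generated by the information available at the start of round $t$. Since $a^t$ is $\mathcal{F}_{t-1}$-measurable while the fresh draws $(v^t_{ie})_{e\in M}$ are independent of $\mathcal{F}_{t-1}$, we have $\mathbb{E}[Y_{i,t}\mid\mathcal{F}_{t-1}]=0$ and $|Y_{i,t}|\le m$. Azuma--Hoeffding together with a union bound over $i\in[n]$ and the envelope $|\ALG-\ALG_\mu|\le mT$ then gives $\mathbb{E}[|\ALG-\ALG_\mu|] \le \mathbb{E}[\max_i|\sum_{t=1}^T Y_{i,t}|] = O(m\sqrt{T\ln(nT)})$. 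Adding the two bounds and absorbing $\ln(nT)=O(\ln T)$ under the mild assumption that $n$ is polynomial in $T$ (otherwise the regret bound is vacuous anyway) yields $|R_T-R_T^\mu|=O(m\sqrt{T\ln T})$.

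The only mildly delicate point I anticipate is recognizing that the $\ALG$ side is a martingale rather than a sum of independent random variables, and then pairing the resulting high-probability bound with the deterministic envelope $|\ALG-\ALG_\mu|\le mT$ to turn it into an expectation bound; everything else is a routine application of standard concentration.
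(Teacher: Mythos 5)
Your proposal follows the same skeleton as the paper's proof: reduce $|\min_i(\cdot)-\min_i(\cdot)|$ to $\max_i|\cdot|$ via the elementary min--max claim, apply per-agent concentration with a union bound over the $n$ agents, and convert the resulting high-probability bound into an expectation bound by pairing it with a deterministic envelope. Two points of comparison are worth recording. First, on the $\ALG$ side you are in fact \emph{more} careful than the paper's write-up: you concentrate $U_i-P_i=\sum_{t}Y_{i,t}$ directly as a sum of bounded martingale differences via Azuma--Hoeffding, whereas the paper centers $U_i$ and $P_i$ at $\lambda_i=\mathbb{E}[U_i]$ and invokes plain Hoeffding even though the per-round summands $\sum_{e}v^t_{ie}a^t_{ie}$ are neither independent nor individually centered at a deterministic mean (the allocations are adaptive). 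Your martingale formulation is the clean way to justify exactly the event the paper actually uses, namely $|U_i-P_i|<2\delta$.

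Second, there is one place where your argument falls slightly short of the lemma as stated. By pairing the union bound over $n$ agents with the crude envelope $|\ALG-\ALG_\mu|\le mT$, you are forced to take $\delta\asymp m\sqrt{T\ln(nT)}$, which gives $O(m\sqrt{T\ln(nT)})$ and recovers $O(m\sqrt{T\ln T})$ only under your added assumption that $n$ is polynomial in $T$; the lemma carries no such assumption. The paper avoids this by using the sharper envelope $\min_{i}U_i\le\frac{1}{n}\sum_{i}U_i\le mT/n$ (each round distributes total value at most $m$ across all agents, since every item is worth at most $1$ and goes to exactly one agent), so the failure term becomes $(mT/n)\cdot 4n\exp(-2\delta^2/(m^2T))$ and the factor of $n$ from the union bound cancels; one may then take $\delta=m\sqrt{T\ln T}$ with no dependence on $n$. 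Substituting that envelope into your argument yields the lemma exactly as stated; everything else, including the observation that $x^1,\dots,x^T$ are deterministic on the $\OPT$ side, matches the paper.
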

\begin{proof}
We first prove the following claim.
\begin{claim}
\label{claim-min-max}
    For $i=1,\ldots,n$, let $a_i$ and $b_i$ be any real numbers.
    Then, we have
    \begin{align}
        \min_{i} a_i-\min_{i} b_i \leq \max_{i} (a_i-b_i) \ \text{ and } \ 
        |\min_{i} a_i-\min_{i} b_i| \leq \max_{i} |a_i-b_i|. 
    \end{align}
\end{claim}
\begin{proof}
Let $i^*_a\in\argmin_{i}a_i$ and $i^*_b\in\argmin_{i}b_i$.
Then,
\begin{align}
    \min_{i} a_i-\min_{i} b_i
    = \min_{i} a_i-b_{i^*_b} \le a_{i^*_b}-b_{i^*_b}
    \le \max_{i}(a_i-b_i) \le \max_{i}|a_i-b_i|.
\end{align}
This implies that the first inequality holds. Similarly to the above, we also have
\begin{align}
    \min_{i} b_i-\min_{i} a_i
    \leq b_{i^*_a}-a_{i^*_a} \leq \max_{i} (b_i-a_i)
    \le \max_{i} |b_i-a_i|.
\end{align}
Then, the second inequality holds.
\end{proof}

Let $a^1,\dots, a^T$ be the random variables representing the algorithm's choice.
For each agent $i$, let $U_i$ and $P_i$ be the cumulative utilities of agent $i$ with respect to the realization of item values and the expected values, respectively, i.e., 
$$U_i = \sum_{t=1}^T \sum_{e\in M} v^t_{ie} a^t_{ie}, \text{ and } P_i = \sum_{t=1}^T \sum_{e\in M} \mu_{ie} a^t_{ie}.$$
We observe that $\lambda_i \coloneqq \Expec{U_i} = \Expec{\sum_{t=1}^T \sum_{e\in M} \mu_{ie} a^t_{ie}} = \Expec{P_i}$ because $v^t_{ie}$ is independent of $a^t$.
We remark that $U_i$ and $P_i$ are the sum of $T$ random variables over $[0, m]$.

By the Hoeffding's inequality, for any $\delta > 0$, we have
\begin{align*}
    \Pr[|U_i - \lambda_i| \geq \delta] \leq 2 \exp\left(-\frac{2\delta^2}{m^2 T}\right), \text{ and }
    \Pr[|P_i - \lambda_i|\geq \delta] \leq 2 \exp\left(-\frac{2\delta^2}{m^2 T}\right).
\end{align*}
Therefore, by the union bound, it holds that
\begin{align*}
    \Pr[\exists i\in [n], |P_i - U_i| \geq 2\delta]
    & \leq \Pr[\exists i, \ |U_i - \lambda_i| \geq \delta \text{ or } |P_i - \lambda_i| \geq \delta]\\
    &\leq \sum_{i\in [n]} (\Pr[|U_i - \lambda_i| \geq \delta]+\Pr[|P_i - \lambda_i| \geq \delta])
    \leq 4n \exp\left(-\frac{2\delta^2}{m^2 T}\right).
\end{align*}
Let $\clean_{\delta}$ be the event such that $|P_i - U_i| < 2\delta$ for any $i\in [n]$.
Then, we have
\begin{align}
    \left|\Expec{\min_{i\in [n]} U_i} - \Expec{\min_{i\in [n]} P_i} \right|
    &\leq \Expec{\Bigl|\min_{i\in [n]} U_i-\min_{i\in [n]} P_i\Bigr|} \\
    &\leq \Expec{\Bigl|\min_{i\in [n]} U_i-\min_{i\in [n]} P_i\Bigr|\middle| \clean_{\delta}}\Pr[\clean_{\delta}]  +mT/n \cdot 4n \exp\left(-\frac{2\delta^2}{m^2 T}\right) \\
    &\leq \Expec{\max_{i\in[n]}|P_i-U_i|\middle| \clean_{\delta}} +4mT \exp\left(-\frac{2\delta^2}{m^2 T}\right) \\
    &\leq 2\delta + 4mT \exp\left(-\frac{2\delta^2}{m^2 T}\right),
    \label{eq:lemma1-delta}
\end{align}
where the third inequality is due to Claim~\ref{claim-min-max}.
If we set $\delta = m\sqrt{T\ln T}$, then the RHS in \eqref{eq:lemma1-delta} is $2m\sqrt{T\ln T}+4m/T$.
Therefore, we obtain
\begin{align}
\label{eq:regret-alg}
\left|\Expec{\ALG} - \Expec{\ALG_{\mu}} \right| = O(m\sqrt{T\ln T}).
\end{align}

Let $x^1,\dots, x^T$ be the choice of an optimal policy with respect to the expected item values, i.e., they achieve $$\max_{x^1,\dots,x^T \in \cA} \min_{i\in [n]} \sum_{t=1}^T \sum_{e\in M} \mu_{ie} x^t_{ie}.$$
We remark that $x^1,\dots, x^T$ are chosen deterministically.
Then, in a similar way to the above discussion, we have
\begin{align}
    \left|\Expec{\min_{i\in [n]} \sum_{t=1}^T \sum_{e\in M} v^t_{ie} x^t_{ie}} - \min_{i\in [n]} \sum_{t=1}^T \sum_{e\in M} \mu_{ie} x^t_{ie} \right|
    = O(m\sqrt{T\ln T}). \label{eq:regret-opt}
\end{align}

By combining~\eqref{eq:regret-alg} and \eqref{eq:regret-opt}, it follows that
\begin{align*}
    |\Expec{\OPT-\ALG} - \Expec{\OPT_{\mu}-\ALG_{\mu}}|
    &= |\Expec{\OPT-\OPT_{\mu}} - \Expec{\ALG-\ALG_{\mu}}|\\
    &= O(m\sqrt{T\ln T}).
\end{align*}
This completes the proof.
\end{proof}

Furthermore, $\OPT_\mu$ is upper bounded by the optimal value of the following LP:
\begin{align}
  \begin{array}{rll}
    \max_{P,x}        & T \cdot P  &   \\
    \text{s.t.} & P\le \sum_{e \in M} \mu_{ie} x_{ie} & (\forall i\in [n]),                    \\
                & \sum_{i\in [n]}x_{ie}=1                            & (\forall e\in M),                  \\
                & 0 \leq x_{ie} \leq 1                              & (\forall i\in [n],\ \forall e \in M).
  \end{array}
  \tag{$\mathrm{LP}$}\label{eq:LPE}
\end{align}
Indeed, if we set $\hat{x}_{ie} = \sum_{t=1}^T x^t_{ie} /T$ ($i\in [n], e\in M$), then $\hat{x}$ is a feasible solution to~\eqref{eq:LPE}.
Let $(P^*,x^*)$ be an optimal solution of~\eqref{eq:LPE}.
We will evaluate $T\cdot P^* - \Expec{\ALG_\mu}$ to obtain an upper bound on $R^{\mu}_T$.

Note that \eqref{eq:LPE} can be interpreted as maximizing the minimum expected per-round utility when a stochastic allocation is allowed.
Since $P^*$ upper bounds the maximum ``expected minimum'' per-round utility, bounding $T\cdot P^* - \Expec{\ALG_\mu}$ leads to per-round fairness on average; see also Remark~\ref{remark:per-round}.

In what follows, we assume $P^*>0$ because otherwise $R^\mu_T=0$.
Moreover, intuitively, if $P^*$ is sufficiently small, then a per-round utility $\sum_{e\in M} \mu_{ie}a^t_{ie}$ of any agent $i$ is not far less than $P^*$, and hence $\ALG_\mu$ is also close to $P^*T$.
Therefore, the difficulty of our problem lies in the case when $P^*$ is large.
This is a nature of max-min fair allocation problems.
Indeed, existing results in~\cite{DJSW2011resourceAllocation,kawase2022online} for competitive analysis also assume that the offline optimal value is sufficiently large.

\section{Algorithm}
\label{sec-algorithm}

In this section, we describe an algorithm with a regret bound of ${O}(m\sqrt{T}\ln T/n + m \sqrt{T\ln (mnT)}+m\ln T \ln(mnT))$.
The regret bound will be shown in the next section.
The algorithm is based on the resource allocation algorithm in \cite{DJSW2011resourceAllocation,kawase2022online}.
The brief description of (a multiple-item variant of) the algorithm is as follows.
It is assumed that the values $v^t_{ie}$ for all $(i,e)$ are given at the beginning of each round.
Let $\epsilon>0$ be a parameter, which will be set later.
We denote by $u_i^t$ the cumulative utility of agent $i$ at the end of round $t$.
In each round $t$, the algorithm chooses an allocation $a^t$ that maximizes a total sum of utilities with respect to item values discounted with $u_i$.
More specifically, $a^t$ achieves $\max_{a\in \cA} \sum_{i\in [n], e\in M} (1-\epsilon)^{u^{t-1}_i/m} v^t_{ie} a_{ie}$.

Due to the feedback model, a direct application of the above resource allocation algorithm is impossible in our setting.
It is also not clear whether the existing result carries to due to the different definition of $\OPT$.

To address those issues, we estimate each value using an \emph{upper confidence bound} (UCB), and reconstruct the performance evaluation by incorporating the error analysis used in \cite{BKS2018}.

For $v\in \mathbb{R}_+$ and $N \in \mathbb{Z}_+$, let
\begin{align*}
    \rad(v, N)= \sqrt{{\Const \cdot v}/{N}}+{\Const}/{N},
\end{align*}
where $\Const$ is a positive constant independent of $v$ and $N$.
For each round $t$ and $(i,e) \in [n] \times M$, we define 
\begin{align}
    \bar{v}^{t}_{ie} = \hat{v}_{ie} + \rad(\hat{v}_{ie}, N_{ie,t})
    \label{eq:UCB}
\end{align}
as a UCB of $v^{t}_{ie}$, where $N_{ie,t}$ is the number of rounds in which item $e$ is assigned to agent $i$ in the first $t-1$ rounds and $\hat{v}_{ie}$ is the average of the $N_{ie,t}$ samples of $v^t_{ie}$.
For this setting of UCBs, the following useful result is known.
\begin{theorem}[\cite{BKS2018}]\label{thm:clean}
    Let $\hat{\nu}$ be the average of $N$ independent samples from a distribution over $[0,1]$ with expectation $\nu$.
    For each $\Const>0$, it holds that 
    \begin{align}
        \Pr [|\nu-\hat{\nu}| \leq \rad(\hat{\nu},N) \leq 3\rad(\nu, N)] \geq 1-e^{-\mathrm{\Omega}(\Const)}.
    \end{align}
    This holds even if $X_1,\dots, X_N \in [0,1]$ are random variables, $\hat{\nu} = \frac{1}{N}\sum_{t=1}^N X_t$ is the sample average, and $\nu = \frac{1}{N}\sum_{t=1}^N \mathbb{E}[X_t \mid X_1,\dots,X_{t-1} ]$.
\end{theorem}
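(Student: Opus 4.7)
The plan is to derive this from Bernstein's inequality (or Freedman's inequality for the martingale version at the end), exploiting that $X \in [0,1]$ implies $\mathrm{Var}(X) \leq \mathbb{E}[X]$ so the variance factor in Bernstein's bound is controlled by the mean $\nu$. Applied to $\hat\nu = \frac{1}{N}\sum_{t=1}^N X_t$, this yields
\[
\Pr\bigl[|\hat\nu - \nu| \geq t\bigr] \leq 2\exp\!\left(-\frac{N t^2/2}{\nu + t/3}\right).
\]
Setting $t = \alpha\,\rad(\nu, N) = \alpha\bigl(\sqrt{\Const\,\nu/N} + \Const/N\bigr)$ for a sufficiently small constant $\alpha > 0$, a short case analysis (splitting on whether $\nu \geq \Const/N$ or $\nu < \Const/N$) shows the exponent is $\Omega(\Const)$ in either regime, producing the high-probability event $\clean \coloneqq \{|\hat\nu - \nu| \leq \alpha\,\rad(\nu, N)\}$ of probability at least $1 - e^{-\Omega(\Const)}$.

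Next, I would argue deterministically that on $\clean$ both conclusions of the theorem hold. For $\rad(\hat\nu, N) \leq 3\,\rad(\nu, N)$, I would substitute $\hat\nu \leq \nu + \alpha\,\rad(\nu, N)$, use $\sqrt{a+b} \leq \sqrt{a}+\sqrt{b}$ to split the resulting square root, and bound the cross term by AM--GM in the form $(\Const/N)^{1/2}(\Const\nu/N)^{1/4} \leq \tfrac12\bigl(\Const/N + \sqrt{\Const\nu/N}\bigr) = \tfrac12\rad(\nu, N)$; adding back the additive $\Const/N$ term and collecting gives the factor $3$ once $\alpha$ is small enough. For $|\hat\nu - \nu| \leq \rad(\hat\nu, N)$, I would split on the sign of $\hat\nu - \nu$: if $\hat\nu \geq \nu$, monotonicity of $\rad$ in its first argument immediately gives $\rad(\hat\nu, N) \geq \rad(\nu, N) \geq |\hat\nu - \nu|$; if $\hat\nu < \nu$, then either $\hat\nu$ is comparable to $\nu$ (say $\hat\nu \geq \nu/9$), in which case $\sqrt{\Const\hat\nu/N}$ already dominates $\nu - \hat\nu$, or else the Bernstein bound forces $\nu = O(\Const/N)$, in which case the additive $\Const/N$ component of $\rad(\hat\nu, N)$ exceeds $\nu - \hat\nu$ by itself.

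For the martingale extension stated at the end, I would replace Bernstein by Freedman's inequality for the martingale difference sequence $\bigl(X_t - \mathbb{E}[X_t \mid X_1,\ldots,X_{t-1}]\bigr)_t$, observing that $\sum_{t=1}^N \mathrm{Var}(X_t \mid X_1,\ldots,X_{t-1}) \leq N\nu$ under exactly the same reasoning (each conditional second moment is bounded by the corresponding conditional mean, since the variables lie in $[0,1]$). The deterministic half of the argument then carries over unchanged. The main obstacle will be coordinating constants: the factor $3$ in the conclusion is tight enough that the choice of $\alpha$ in the Bernstein step, the AM--GM constants used to bound $\rad(\hat\nu, N)$, and the threshold $\nu/9$ in the sub-case analysis must all be selected compatibly. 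This is essentially a constant-chasing exercise, but it is the only delicate part of the proof.
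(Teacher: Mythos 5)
First, note that the paper does not prove this statement at all: it is imported verbatim from \cite{BKS2018} (together with Claim~\ref{claim-r-leq-3r}, which isolates the deterministic implication $|\nu-\hat\nu|\le\rad(\hat\nu,N)\Rightarrow\rad(\hat\nu,N)\le 3\rad(\nu,N)$). So there is no in-paper proof to compare against; your reconstruction should be judged against the standard argument. For the i.i.d.\ case your plan is sound and is essentially that standard argument: Bernstein with $\mathrm{Var}(X)\le\mathbb{E}[X]$ gives the clean event $|\hat\nu-\nu|\le\alpha\,\rad(\nu,N)$ with failure probability $e^{-\Omega(\Const)}$ in both regimes $\nu\gtrless\Const/N$, and your deterministic manipulations of $\rad$ do close: the AM--GM step yields $\rad(\hat\nu,N)\le(1+\tfrac32\sqrt{\alpha})\rad(\nu,N)$, and the sub-case analysis for $\hat\nu<\nu$ works with, say, $\alpha=1/3$ (in the branch $\hat\nu<\nu/9$ the Bernstein constraint forces $\nu\le C'\Const/N$ with $C'<1$, so the additive term alone covers $\nu-\hat\nu$). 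The constant-chasing you flag is real but resolvable.

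The genuine gap is in the martingale extension, which is part of the stated theorem. There, $\nu=\frac1N\sum_t\mathbb{E}[X_t\mid X_1,\dots,X_{t-1}]$ is itself a random variable, so the event $\{|\hat\nu-\nu|\ge\alpha\,\rad(\nu,N)\}$ has a \emph{random} threshold, and Freedman's inequality in its usual form (``$\Pr[M_N\ge a$ and $V_N\le b]\le\exp(-\frac{a^2/2}{b+a/3})$'' for fixed $a,b$) cannot simply be ``applied to the same threshold''; the argument does not carry over unchanged as you assert. You need an additional device: either a peeling/union bound over a geometric grid of values of the predictable variation $V_N$ (costing a factor $O(\ln N)$ on the failure probability, which is absorbed since $\Const=\Theta(\ln(mnT))$), a self-normalized martingale inequality, or --- the route actually taken in \cite{BKS2018} --- the MGF supermartingale argument: for $X_t\in[0,1]$ one has $\mathbb{E}[e^{\lambda X_t}\mid\mathcal{F}_{t-1}]\le\exp((e^\lambda-1)\mathbb{E}[X_t\mid\mathcal{F}_{t-1}])$, so $\exp(\lambda\sum_{t\le s}X_t-(e^\lambda-1)\sum_{t\le s}\mathbb{E}[X_t\mid\mathcal{F}_{t-1}])$ is a supermartingale and Markov's inequality directly yields a multiplicative Chernoff bound in which the (random) conditional mean appears on both sides. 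Any of these repairs the proof, but one of them must be stated; without it the final paragraph of your proposal is a non sequitur.
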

We will set the constant $\Const=\Theta(\ln (mnT))$.
Then, by using the union bound, we have
\begin{equation}
    \mu_{ie} \in [\hat{v}^t_{ie}-\rad(\hat{v}^t_{ie},N_{ie,t}), \hat{v}^t_{ie}+\rad(\hat{v}^t_{ie},N_{ie,t})]
    \label{eq:ucb-success}
\end{equation}
for any $(i,e) \in [n] \times M$ and round $t$
with probability at least $1-\frac{1}{T}$.
We call this event a \emph{clean execution}~\cite{BKS2018} and denote it by $\clean$.

Our algorithm is summarized in Algorithm~\ref{alg:proposed-ucb}.
We devote the first $n$ rounds to collect one sample of each item value.
At the subsequent rounds $t$, assuming $\bar{v}^t_{ie}$ as an estimation of $\mu_{ie}$, we choose an allocation $a^t$ maximizing $\sum_{i\in [n],e \in M}  (1-\epsilon)^{u^{t-1}_i/m} \bar{v}_{ie}^{t} \cdot a^t_{ie}$.
We can obtain $a^t$ easily just by allocating each item $e$ to the agent with the largest discounted UCB for $e$.

\begin{algorithm}[tb]
\caption{Allocation algorithm}
\label{alg:proposed-ucb}
\textbf{Parameter}: $\epsilon\in(0,1)$.
\begin{algorithmic}[1] 
\FOR{\(t=1,\dots,n\)}
    \STATE Assign all items to agent $t$ and receive values $v^{t}_{te}$ for each $e \in M$.
\ENDFOR
\STATE Set $\bar{v}^{n+1}_{ie}$ as in~\eqref{eq:UCB} for each $i\in [n]$ and $e\in M$.
\STATE Let $u^n_i=0$ for each $i\in [n]$.
\FOR{\(t=n+1,\dots, T\)}
    \STATE Let $a^{t}$ be an allocation $a \in \cA$ maximizing
    \[\sum_{i\in [n]} \sum_{e \in M} (1-\epsilon)^{\frac{1}{m}u^{t-1}_i} \bar{v}_{ie}^{t} \cdot a_{ie}.\]
    \label{line:choice}
    \STATE Receive values $v^{t}_{ie}$ for each $(i,e)$ such that $a^t_{ie}=1$.
    \STATE Set $u^t_{i}\ot u^{t-1}_i + \sum_{e\in M} \bar{v}_{ie}^{t}a^t_{ie} $ for each $i\in [n]$.
    \STATE  Set $\bar{v}^{t+1}_{ie}$ accordingly as in~\eqref{eq:UCB}.
\ENDFOR
\end{algorithmic}
\end{algorithm}

\section{Regret Analysis}
\label{sec-regret-analysis}

The main goal of this section is to prove the following theorem, which provides an asymptotic guarantee on $R_T$.

\begin{theorem}\label{thm:alg-main}
The regret $R_T$ for Algorithm \ref{alg:proposed-ucb} is bounded as
\[
    R_T 
    \leq m + P^*
    + W'\left(
        \epsilon
        + n e^{-\frac{\epsilon^2 W'}{2m}}
    \right)
    + O(m\sqrt{T\ln T} + err)
    ,
\]
where $W'=P^*(T-n)$, $err=O(\sqrt{\Const m^2 T}+\Const m\ln T)$ and $\Const=\Theta(\ln(mnT))$.
If $T \geq e^{\frac{2m}{P^*}}+n$, by setting
$
    \epsilon = \ln(T-n)/\sqrt{T-n}
    ,
$
we have
$$
    R_T=
    O\left(
        \frac{m}{n}\sqrt{T}\ln T + err
    \right).
$$
\end{theorem}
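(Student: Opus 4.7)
My plan is to chain three reductions. First, Lemma~\ref{lem:surrogate regret} lets me replace $R_T$ by the surrogate regret $R_T^\mu$ at an additive cost of $O(m\sqrt{T\ln T})$. Second, the LP relaxation \eqref{eq:LPE} yields $\OPT_\mu\le T\cdot P^*$, so it suffices to lower-bound $\Expec{\ALG_\mu}$. Third, summing the per-agent constraints of \eqref{eq:LPE} and using $\mu_{ie}\le 1$ with $\sum_i x_{ie}=1$ gives $P^*\le m/n$, so the first $n$ exploration rounds of Algorithm~\ref{alg:proposed-ucb} cost at most $nP^*\le m$ in the surrogate regret, leaving the last $T-n$ rounds as the real target.

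\textbf{Clean event.} I would apply Theorem~\ref{thm:clean} with $\Const=\Theta(\ln(mnT))$ and a union bound over all $(i,e,t)$ to obtain the clean event $\clean$, on which $\mu_{ie}\le \bar v^t_{ie}\le \mu_{ie}+6\,\rad(\mu_{ie},N_{ie,t})$ for every triple and which holds with probability at least $1-1/T$. The complement of $\clean$ contributes at most $mT\cdot(1/T)=m$ to the regret, accounting for the leading $m$ summand in the stated bound.

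\textbf{DJSW-style potential argument.} Mimicking the resource allocation analysis of \cite{DJSW2011resourceAllocation}, I would track the potential $\Phi_t=\sum_{i\in[n]}(1-\epsilon)^{u^t_i/m}$, where $u^t_i$ is the UCB-based cumulative utility maintained by the algorithm. Under $\clean$, combining the greedy rule of Line~7, LP-feasibility of $x^*$, and the UCB property $\bar v^t_{ie}\ge \mu_{ie}$ yields
$$\sum_{i,e}(1-\epsilon)^{u^{t-1}_i/m}\bar v^t_{ie}a^t_{ie}\;\ge\;\sum_{i,e}(1-\epsilon)^{u^{t-1}_i/m}\mu_{ie}x^*_{ie}\;\ge\;P^*\,\Phi_{t-1}.$$
Paired with the linearization $(1-\epsilon)^{z/m}\le 1-\epsilon z/m$ for $z\in[0,m]$, this gives $\Phi_t\le (1-\epsilon P^*/m)\Phi_{t-1}$, hence $\Phi_T\le n(1-\epsilon P^*/m)^{T-n}\le ne^{-\epsilon W'/m}$, and isolating any single agent on the left-hand side yields a lower bound on $\min_i u^T_i$. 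The $W'\cdot ne^{-\epsilon^2 W'/(2m)}$ factor should arise from an Azuma/Hoeffding-type concentration that upgrades the expected-potential inequality to a high-probability bound, absorbing the deviation of realized per-round UCB reward from its conditional mean.

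\textbf{From UCBs back to means, asymptotics, and main obstacle.} To convert $u^T_i$ into the expected utility $\sum_{t,e}\mu_{ie}a^t_{ie}$ appearing in $\ALG_\mu$, I would bound the per-$(t,i,e)$ gap by $6\,\rad(\mu_{ie},N_{ie,t})$ on $\clean$ and combine $\sum_{s=1}^N 1/\sqrt{s}=O(\sqrt{N})$ with Cauchy--Schwarz over $e$ and $\sum_{i,e}N_{ie,T+1}\le mT$ to recover the err term $O(\sqrt{\Const m^2 T}+\Const m\ln T)$. Plugging $\epsilon=\ln(T-n)/\sqrt{T-n}$, the leading factor becomes $W'\epsilon\le (m/n)\sqrt{T-n}\,\ln(T-n)=O(m\sqrt{T}\ln T/n)$, while the hypothesis $T\ge e^{2m/P^*}+n$ yields $\ln(T-n)\ge 2m/P^*$, hence $\epsilon^2 W'/(2m)\ge \ln(T-n)$, which forces $W'\cdot ne^{-\epsilon^2 W'/(2m)}\le nW'/(T-n)\le m$. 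I expect the most delicate step to be the potential analysis itself: unlike in DJSW the algorithm reacts to UCBs rather than true values, the per-round increment $\sum_e\bar v^t_{ie}a^t_{ie}$ can slightly exceed $m$, and the concentration yielding $ne^{-\epsilon^2 W'/(2m)}$ must be carried through without letting UCB-estimation noise and value-sampling noise interact in an uncontrolled way.
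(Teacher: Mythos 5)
Your overall architecture matches the paper's proof almost exactly: reduce $R_T$ to $R_T^\mu$ via Lemma~\ref{lem:surrogate regret}, bound $\OPT_\mu$ by $TP^*$, absorb the first $n$ rounds using $P^*\le m/n$, run a DJSW-style potential argument on $\Phi$ under the clean event (greedy choice $\ge$ value at $x^*$ by convex decomposition into integral allocations, $\bar v\ge\mu$ by optimism, linearization $(1-\epsilon)^x\le 1-\epsilon x$), convert UCBs back to means via $\sum_{l\le N}\rad(\mu,l)=O(\sqrt{\Const\mu N}+\Const\ln N)$ and Cauchy--Schwarz to get $err$, and tune $\epsilon=\ln(T-n)/\sqrt{T-n}$. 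Your asymptotic bookkeeping at the end ($\epsilon W'\le \frac{m}{n}\sqrt{T}\ln T$ and $\epsilon^2W'/(2m)\ge\ln(T-n)$ under the hypothesis on $T$) is also the paper's.

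The one genuine gap is the step producing the term $W'\cdot n e^{-\epsilon^2W'/(2m)}$. You attribute it to ``an Azuma/Hoeffding-type concentration that upgrades the expected-potential inequality to a high-probability bound,'' and alternatively to ``isolating a single agent'' in $\Phi_T\le ne^{-\epsilon W'/m}$. Neither works as stated. The potential inequality in the paper is already pathwise under $\clean$ (the $\bar X^t_i$ are deterministic given the history), so there is nothing to upgrade and no martingale deviation to control. And isolating agent $i^*=\argmin_i\sum_t\bar X^t_i$ and taking logarithms gives $W'-\sum_t\bar X^t_{i^*}\le \epsilon W'+\frac{m\ln n}{\epsilon}$, which with $\epsilon=\ln T'/\sqrt{T'}$ yields an extra $O(m\sqrt{T}\ln n/\ln T)$ term that is \emph{not} $O(\frac{m}{n}\sqrt{T}\ln T)$ in general (it requires $n\ln n\lesssim(\ln T)^2$); this route is exactly Theorem~\ref{thm:another}, not Theorem~\ref{thm:alg-main}. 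What the paper actually does is apply Markov's inequality to the exponential moment: $\Pr\bigl[\sum_t\bar X^t_i\le(1-\epsilon)W'\bigr]\le \Expec{(1-\epsilon)^{\frac1m\sum_t\bar X^t_i}}/(1-\epsilon)^{(1-\epsilon)W'/m}$, bounds the numerator by $\Phi(n)=n(1-\epsilon P^*/m)^{T-n}\le ne^{-\epsilon W'/m}$ via the pathwise monotonicity, and uses $(1-\epsilon)^{-(1-\epsilon)}\le e^{\epsilon-\epsilon^2/2}$ to obtain $ne^{-\epsilon^2W'/(2m)}$; the regret contribution is then $\Pr[\text{bad}]\cdot W'$ plus $\epsilon W'$ on the complement. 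You would need to supply this Chernoff-via-Markov step to reach the stated bound. (Your observation that $\bar X^t_i$ may exceed $m$, threatening the linearization $(1-\epsilon)^{\bar X^t_i/m}\le 1-\epsilon\bar X^t_i/m$, is a legitimate subtlety that the paper itself passes over; it is fixable by capping $\bar v^t_{ie}$ at $1$.)
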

Note that weakening the assumption on $T$ to $T\geq n$ yields a regret bound of $O(m\sqrt{T\ln n/n}+err)$.
We show this in Section~\ref{sec:another regret bound}.

To prove Theorem~\ref{thm:alg-main}, it suffices to show that $R^{\mu}_T\leq m + P^*
    + W'\left(
        \epsilon
        + n e^{-\frac{\epsilon^2 W'}{2m}}
    \right)+O(err)$ by Lemma~\ref{lem:surrogate regret}.

As described before, $R^{\mu}_T \leq TP^*-\Expec{\ALG_\mu}$.
For each agent $i$, let $X^t_{i}$ be a random variable representing the reward of the agent at round $t$ with respect to the expected item values, i.e., $X^t_{i} = \sum_{e\in M} \mu_{ie}a^t_{ie}$.
For notational convenience, let $W'=P^*(T-n)$ and let $\ALG'_\mu=\min_{i\in [n]}\sum_{t=n+1}^T X^t_i$.
The following simple calculations allow us to ignore regret in the first $n$ rounds:
\begin{align}
    R^{\mu}_T
    & \le  P^*T-\Expec{\min_{i\in [n]}\sum_{t=n+1}^T X^t_i}
    =P^*n +W'-\Expec{\ALG'_\mu}
    \leq m+W'-\Expec{\ALG'_\mu}.
    \label{eq:nrounds}
\end{align}
Then, in the rest of this section,
we bound $W'-\Expec{\ALG'_\mu}$.

For each agent $i$, let $\bar{X}^t_{i}$ be a random variable representing the reward of the agent at round $t$ if values are replaced with their UCBs, i.e., $\bar{X}^t_{i} = \sum_{e\in M} \bar{v}^{t}_{ie}a^t_{ie}$.
In addition, let $\ALGucb\coloneqq\min_{i\in [n]}\sum_{t=n+1}^T \bar{X}^t_i$ be the total reward of Algorithm~\ref{alg:proposed-ucb} after round $n$ with respect to the UCBs.
We first claim that $\Expec{\ALGucb}$ is not far from $\Expec{\ALG'_\mu}$ in the follwing lemma.
Let $err \coloneqq \Expec{\ALGucb}-\Expec{\ALG'_\mu}$.
\begin{lemma}\label{lem:diff UCB}
It holds that $err =O(\sqrt{\Const m^2 T} + \Const m\ln T)$.
\end{lemma}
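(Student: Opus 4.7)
The plan is to condition on the clean execution event $\clean$ and to use the standard UCB-style telescoping bound on the radii, combined with Claim~\ref{claim-min-max} to handle the min-over-agents structure.

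First, I would decompose
\[
    err = \Expec{(\ALGucb-\ALG'_\mu)\mathbf{1}[\clean]}
        + \Expec{(\ALGucb-\ALG'_\mu)\mathbf{1}[\clean^c]}.
\]
On $\clean^c$ (which has probability at most $1/T$), the crude bounds $\ALG'_\mu\leq mT$ and $\bar v^t_{ie}\leq 1+\sqrt{\Const}+\Const$ give $|\ALGucb-\ALG'_\mu|=O(\Const mT)$, so the second term contributes only $O(\Const m)$, which is absorbed into the claimed error.

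On $\clean$, the definition of $\bar v^{t}_{ie}$ and Theorem~\ref{thm:clean} yield $0\leq \bar v^t_{ie}-\mu_{ie}\leq 2\rad(\hat v^t_{ie},N_{ie,t})\leq 6\rad(\mu_{ie},N_{ie,t})$, so $\bar X^t_i\geq X^t_i$ for every $i,t$. Setting $S_i=\sum_{t=n+1}^T X^t_i$ and $\bar S_i=\sum_{t=n+1}^T\bar X^t_i$, Claim~\ref{claim-min-max} gives
\[
    \ALGucb-\ALG'_\mu
    =\min_i \bar S_i-\min_i S_i
    \leq \max_i(\bar S_i-S_i)
    \leq 6\max_{i}\sum_{e\in M}\sum_{t=n+1}^T a^t_{ie}\,\rad(\mu_{ie},N_{ie,t}).
\]

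Next I would bound the inner double sum for a fixed agent $i$. Because the first $n$ rounds give $N_{ie,n+1}=1$ for every $(i,e)$, if I let $T_{ie}$ denote the number of rounds $t\geq n+1$ with $a^t_{ie}=1$, then the values of $N_{ie,t}$ on those rounds run through $1,2,\dots,T_{ie}$. Therefore
\[
    \sum_{t=n+1}^T a^t_{ie}\rad(\mu_{ie},N_{ie,t})
    =\sum_{k=1}^{T_{ie}}\bigl(\sqrt{\Const\mu_{ie}/k}+\Const/k\bigr)
    =O\bigl(\sqrt{\Const\mu_{ie}T_{ie}}+\Const\ln T\bigr),
\]
using $\sum_{k=1}^{K}1/\sqrt{k}=O(\sqrt K)$ and $\sum_{k=1}^K 1/k=O(\ln K)$. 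Summing over $e$ and using Cauchy–Schwarz with $\mu_{ie}\leq 1$ and $\sum_e T_{ie}\leq mT$ gives
\[
    \sum_{e\in M}\sqrt{\Const\mu_{ie}T_{ie}}
    \leq \sqrt{\Const}\sqrt{m\sum_{e} T_{ie}}
    \leq m\sqrt{\Const T}.
\]
Taking the maximum over $i$ and multiplying by $6$ yields $O(\sqrt{\Const m^2 T}+\Const m\ln T)$ for the first part of the decomposition, and combining with the bad-event term completes the proof.

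The main obstacle I anticipate is the min-over-agents structure, which prevents a direct per-agent regret decomposition; Claim~\ref{claim-min-max} together with the sign property $\bar X^t_i\geq X^t_i$ under $\clean$ is what makes the reduction to a ``worst agent'' UCB sum clean. The Cauchy–Schwarz step is then routine, and the bad-event contribution is negligible thanks to the $O(1/T)$ failure probability of the clean execution.
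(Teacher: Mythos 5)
Your proposal is correct and follows essentially the same route as the paper: condition on the clean execution, reduce the min-over-agents difference to a worst-agent sum via Claim~\ref{claim-min-max}, bound $\bar v^t_{ie}-\mu_{ie}$ by $6\rad(\mu_{ie},N_{ie,t})$ using Theorem~\ref{thm:clean} and Claim~\ref{claim-r-leq-3r}, telescope the radii, and finish with Cauchy--Schwarz plus a negligible bad-event term. Your handling of the complement of $\clean$ is in fact slightly more careful than the paper's (which crudely charges $mT$ there despite $\bar v^t_{ie}$ possibly exceeding $1$), but this changes nothing in the final bound.
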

\begin{proof}
By Claim~\ref{claim-min-max} in the proof of Lemma~\ref{lem:surrogate regret}, it follows that
\begin{align}
   \ALGucb-\ALG'_\mu
    \leq \max_{i\in [n]} \sum_{t=n+1}^T \left(\bar{X}^t_{i} - {X}^t_{i} \right).
\end{align}
Thus, in what follows, we show that  
\begin{equation}
\label{eq:lemma-main}
    \sum_{t=n+1}^T \left(\bar{X}^t_{i} - {X}^t_{i} \right)
    = O(\sqrt{\Const m^2 T} + \Const m\ln T)
\end{equation} 
for every $i\in [n]$ when the clean execution $\clean$ occurs.
Once this is shown, it follows from $\Pr[\clean]\geq 1-1/T$ that
\begin{align}
    \Expec{\ALGucb}-\Expec{\ALG'_\mu}
    &\leq \begin{multlined}[t]
        O(\sqrt{\Const m^2 T} + \Const m\ln T) 
        + 1/T\cdot mT
    \end{multlined} \\
    &= O(\sqrt{\Const m^2 T} + \Const m\ln T),
\end{align}
and the proof is complete.
To prove \eqref{eq:lemma-main},
we use the following claim.

\begin{claim}[\cite{BKS2018}]
\label{claim-r-leq-3r}
    For any $\nu,\hat{\nu}\in [0,1]$,
    if $|\nu-\hat{\nu}|\le \rad(\hat{\nu},N)$, then $\rad(\hat{\nu}, N) \leq 3\rad(\nu, N)$.
\end{claim}

When we assume the clean execution, we have
\begin{align}
    \sum_{t=n+1}^T \left(\bar{X}^t_{i} - {X}^t_{i} \right) 
    &=\sum_{t=n+1}^T \sum_{e\in M} (\bar{v}^t_{ie}-\mu_{ie})a^t_{ie} \\
    &\le 2\sum_{t=n+1}^T \sum_{e\in M}\rad(\hat{v}^t_{ie}, N_{ie,t})a^t_{ie} \\
    &\le 6\sum_{t=n+1}^T \sum_{e\in M}\rad(\mu_{ie}, N_{ie,t})a^t_{ie} \\
    &\le 6\sum_{t=1}^T \sum_{e\in M}\rad(\mu_{ie}, N_{ie,t})a^t_{ie} \\
    &= 6\sum_{e\in M} \sum_{l=1}^{N_{ie,T}} \rad(\mu_{ie}, l) \\
    &= 6\sum_{e\in M}
        O( \sqrt{\Const\mu_{ie}N_{ie,T}} + \Const\ln N_{ie,T}) \\
    &\leq O\Bigl(\sqrt{\Const m \sum_{e\in M} \mu_{ie}N_{ie,T}} + \Const m\ln N_{ie,T}\Bigr) \\
    &\leq O\Bigl(\sqrt{\Const m^2 T } + \Const m\ln T\Bigr),
\end{align}
where the first inequality is due to \eqref{eq:ucb-success}, the second inequality is due to Claim \ref{claim-r-leq-3r}, and the forth inequality is the Cauchy-Schwartz inequality.
\end{proof}

Lemma~\ref{lem:diff UCB} implies that we only need to evaluate
$W'-\Expec{\ALGucb}$.
We proceed based on the idea in~\cite{DJSW2011resourceAllocation, kawase2022online}.

By the union bound and Markov's inequality, the probability that $\ALGucb$ is at most $(1-\varepsilon)W'$ is 
\begin{align}
  \Pr\left[\min_{i\in [n]}\sum_{t=n+1}^T \bar{X}^t_{i}\le (1-\epsilon)W'\right]
   & \le \sum_{i\in [n]}\Pr\left[\sum_{t=n+1}^T \bar{X}^t_{i}\le (1-\epsilon)W'\right]  \\
   & = \sum_{i\in [n]}\Pr\left[(1-\epsilon)^{\frac{1}{m}\sum_{t=n+1}^T \bar{X}^t_{i}}\ge (1-\epsilon)^{(1-\epsilon)W'/m}\right]                              \\
   & \le \sum_{i\in [n]}\mathbb{E}\left[(1-\epsilon)^{\frac{1}{m}\sum_{t=n+1}^T \bar{X}^t_{i}}\right]/(1-\epsilon)^{(1-\epsilon)W'/m}. \label{eq:iid_bound1}
\end{align}
If the rightmost value in~\eqref{eq:iid_bound1} is sufficiently small, then we can bound the regret by $m +{O}(\varepsilon W')$ with high probability.
For $s=n,n+1,\dots,T$, let us define $\Phi(s)$ as
\begin{align}
  \Phi(s)\coloneqq \sum_{i\in [n]} (1-\epsilon)^{\frac{1}{m}\sum_{t=n+1}^s \bar{X}^t_{i}} \cdot \Bigl(1-\frac{\epsilon P^*}{m}\Bigr)^{T-s}.
\end{align}
We note that the rightmost value in \eqref{eq:iid_bound1} is equal to $\mathbb{E}[\Phi(T)]/(1-\epsilon)^{(1-\epsilon)W'/m}$.

\begin{lemma}\label{lem:potential_monotone}
    In a clean execution of Algorithm~\ref{alg:proposed-ucb}, $\Phi(s)$ is monotone non-increasing in $s$.
\end{lemma}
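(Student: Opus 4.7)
The plan is to establish the inequality $\Phi(s)\leq \Phi(s-1)$ round by round. After factoring out the common term $(1-\epsilon P^*/m)^{T-s}$, this reduces to showing
\begin{align}
    \sum_{i\in[n]} (1-\epsilon)^{u^{s}_i/m}
    \;\leq\; \Bigl(1-\tfrac{\epsilon P^*}{m}\Bigr)\sum_{i\in[n]}(1-\epsilon)^{u^{s-1}_i/m},
\end{align}
where I have used that $u^s_i=\sum_{t=n+1}^s\bar X^t_i$ by line~9 of Algorithm~\ref{alg:proposed-ucb}. Writing $u^s_i=u^{s-1}_i+\bar X^s_i$ decouples the exponent into a ``static'' factor $(1-\epsilon)^{u^{s-1}_i/m}$ and an ``incremental'' factor $(1-\epsilon)^{\bar X^s_i/m}$, which is the setup for the two linearization steps below.

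The first step is a deterministic pointwise bound. Since $(1-\epsilon)^y$ is convex on $[0,1]$, the secant inequality gives $(1-\epsilon)^y\leq 1-\epsilon y$ for $y\in[0,1]$. Applied with $y=\bar X^s_i/m\in[0,1]$ (using that each $\bar v^s_{ie}\in[0,1]$, if necessary by implicitly clipping the UCB at $1$, which is harmless in a clean execution since $\mu_{ie}\le 1\le\min(1,\bar v^s_{ie})$), this yields
\begin{align}
    \sum_{i\in[n]}(1-\epsilon)^{u^s_i/m}
    \;\leq\; \sum_{i\in[n]}(1-\epsilon)^{u^{s-1}_i/m}\Bigl(1-\tfrac{\epsilon}{m}\bar X^s_i\Bigr).
\end{align}
Subtracting the RHS of the target inequality, it remains to prove
\begin{align}
    \sum_{i\in[n]} (1-\epsilon)^{u^{s-1}_i/m}\,\bar X^s_i
    \;\geq\; P^*\sum_{i\in[n]}(1-\epsilon)^{u^{s-1}_i/m}.
\end{align}

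The second step uses the algorithm's greedy choice together with clean execution and LP feasibility. Write $w_i=(1-\epsilon)^{u^{s-1}_i/m}$. By line~7, $a^s$ maximizes $\sum_{i,e} w_i\bar v^{s}_{ie}a_{ie}$ over $a\in\cA$; since the LP relaxation (free each item to the max-weight agent) has an integral optimum, this max is at least the value attained by any feasible fractional allocation. Plugging in the LP optimum $(P^*,x^*)$ from \eqref{eq:LPE}, which satisfies $\sum_i x^*_{ie}=1$ and $\sum_e\mu_{ie}x^*_{ie}\ge P^*$, I get
\begin{align}
    \sum_{i\in[n]} w_i\bar X^s_i
    \;=\; \sum_{i,e} w_i\bar v^s_{ie}a^s_{ie}
    \;\geq\; \sum_{i,e} w_i\bar v^s_{ie}x^*_{ie}
    \;\geq\; \sum_{i,e} w_i\mu_{ie}x^*_{ie}
    \;\geq\; P^*\sum_{i\in[n]} w_i,
\end{align}
where the second inequality uses $\bar v^s_{ie}\geq\mu_{ie}$ from the clean execution condition~\eqref{eq:ucb-success}. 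Chaining the two steps establishes $\Phi(s)\leq\Phi(s-1)$ for every $s\in\{n+1,\dots,T\}$.

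The main obstacle I anticipate is the bookkeeping around the convexity step: ensuring $\bar X^s_i/m\leq 1$ so that the secant inequality $(1-\epsilon)^y\leq 1-\epsilon y$ is legitimately applicable. In the clean execution this is straightforward under the standard convention that UCBs are clipped to $[0,1]$, which preserves the key property $\bar v^s_{ie}\ge\mu_{ie}$ used in the final chain; the remaining combinatorial content of the proof is then a one-line application of the greediness of the algorithm against the LP witness.
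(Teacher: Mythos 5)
Your proof is correct and follows essentially the same route as the paper: linearize the per-round factor via $(1-\epsilon)^y\le 1-\epsilon y$, then compare the greedy allocation against the fractional LP optimum $x^*$ (your ``integral optimum of the relaxation'' argument is the same as the paper's explicit convex decomposition of $x^*$ into integral allocations), and finally invoke the clean execution to pass from $\bar v^s_{ie}$ to $\mu_{ie}$. Your remark about clipping the UCBs at $1$ so that $\bar X^s_i/m\in[0,1]$ flags a detail the paper glosses over (it asserts this membership without comment even though the raw UCB can exceed $1$), though note the inequality chain you wrote there should read $\mu_{ie}\le\min(1,\bar v^s_{ie})$ rather than $\mu_{ie}\le 1\le\min(1,\bar v^s_{ie})$.
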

\begin{proof}
  Since the feasible region of \eqref{eq:LPE} is a subset of the convex hull of integral allocations of $M$ to $[n]$, we can decompose $x^*$ as a convex combination of some integral allocations $y^1,\dots, y^k$ so that $x^* = \sum_{j \in [k]} \lambda_j y^j$, where $\lambda_j\geq 0$ ($\forall j\in [k]$) and $\sum_{j\in [k]} \lambda_j =1$.
  We note that $y^1, \dots, y^k$ are not necessarily optimal solutions to~\eqref{eq:LPE}.
  Then, for $s=n+1,\dots,T-1$, letting $\alpha_i = (1-\epsilon)^{\frac{1}{m} \sum_{t=n+1}^s \bar{X}^t_{i}}$, we can see that
  \begin{align}
    \Phi(s+1)
     & = \sum_{i\in [n]}\alpha_i\cdot (1-\epsilon)^{\frac{1}{m} \bar{X}^{s+1}_{i}}\cdot \bigl(1-\frac{\epsilon}{m}P^*\bigr)^{T-s-1}\\
     & \le \sum_{i\in [n]}\alpha_i\cdot \bigl(1-\frac{\epsilon}{m} \bar{X}^{s+1}_{i} \bigr)\cdot \bigl(1-\frac{\epsilon}{m}P^*\bigr)^{T-s-1}\\
     & \le \sum_{i\in [n]}\alpha_i\cdot\bigl(1- \frac{\epsilon}{m}P^*\bigr)\cdot \bigl(1-\frac{\epsilon }{m}P^*\bigr)^{T-s-1} \\
     & = \sum_{i\in [n]}\alpha_i\cdot\bigl(1-\frac{\epsilon}{m}P^*\bigr)^{T-s}
    = \Phi(s).
  \end{align}
  Here, the first inequality holds by $\bar{X}^{s+1}_{i}/m\in [0,1]$ and $(1-\epsilon)^x\le 1-\epsilon x$ for any $x\in[0,1]$. 
  As for the second inequality,
  \begin{align}
  \sum_{i\in [n]} \alpha_i \cdot \bar{X}^{s+1}_{i}
  &\ge 
  \sum_{j=1}^k \lambda_j \sum_{i\in [n]} \alpha_i \sum_{e\in M}\bar{v}^{s+1}_{ie}y^j_{ie}
  = \sum_{i\in [n]} \alpha_i \sum_{e\in M}\bar{v}^{s+1}_{ie}x^*_{ie}
  \end{align}
  holds for each $i\in [n]$ by the choice of $a^t$ in line~\ref{line:choice}.
  Since we assume a clean execution, it further holds that
  $
  \sum_{e\in M}\bar{v}^{s+1}_{ie}x^*_{ie} \geq \sum_{e\in M} \mu_{ie} x^*_{ie} \geq P^*$.
\end{proof}
The proof of Lemma~\ref{lem:potential_monotone} requires a connection between a utility with respect to the UCBs and an optimal policy.
This task is made easier if we use the surrogate regret.

\begin{lemma}\label{lem:potential_zero}
  It holds that $\Phi(n)/(1-\epsilon)^{(1-\epsilon)W'/m}\le n\cdot e^{-\frac{\epsilon^2W'}{2m}}$.
\end{lemma}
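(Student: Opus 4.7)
The plan is to unfold the definition of $\Phi$ at $s = n$: the inner sum $\sum_{t=n+1}^{n}\bar{X}_i^t$ is empty, so each factor $(1-\epsilon)^{0}$ equals $1$, and
\[
\Phi(n) \;=\; n\bigl(1-\tfrac{\epsilon P^*}{m}\bigr)^{T-n}.
\]
Using $1-x\le e^{-x}$ on the bracket, and recalling $W' = P^*(T-n)$, this gives $\Phi(n)\le n\,e^{-\epsilon W'/m}$. Dividing by $(1-\epsilon)^{(1-\epsilon)W'/m}$, the claim reduces to
\[
e^{-\epsilon W'/m}\Big/(1-\epsilon)^{(1-\epsilon)W'/m}\;\le\;e^{-\epsilon^{2} W'/(2m)}.
\]
Taking logarithms and dividing through by $W'/m>0$ (using the standing assumption $P^*>0$), the inequality becomes the purely analytic statement
\[
-(1-\epsilon)\ln(1-\epsilon)\;\le\;\epsilon-\tfrac{\epsilon^{2}}{2},\qquad \epsilon\in[0,1).
\]

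To verify this I would set $g(\epsilon)\coloneqq\epsilon-\tfrac{\epsilon^{2}}{2}+(1-\epsilon)\ln(1-\epsilon)$, note $g(0)=0$, and compute $g'(\epsilon)=-\ln(1-\epsilon)-\epsilon\ge 0$, the latter following from the power series $-\ln(1-\epsilon)=\sum_{k\ge 1}\epsilon^{k}/k\ge\epsilon$. Hence $g$ is nondecreasing and stays $\ge 0$ on $[0,1)$. Alternatively, one can just expand $(1-\epsilon)\ln\!\bigl(1/(1-\epsilon)\bigr)=(1-\epsilon)\sum_{k\ge 1}\epsilon^{k}/k$, let the series telescope to $\epsilon-\sum_{k\ge 2}\epsilon^{k}/\bigl(k(k-1)\bigr)$, and drop all terms beyond $k=2$ to get $\le\epsilon-\epsilon^{2}/2$.

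I do not anticipate a real obstacle here: once $\Phi(n)$ is written in closed form, the lemma reduces to a one-variable convexity estimate in the same spirit as the multiplicative Chernoff bound. The only minor judgment calls are (i) applying $1-x\le e^{-x}$ to $1-\epsilon P^*/m$ before dealing with the $(1-\epsilon)^{(1-\epsilon)W'/m}$ term so that $P^*$ disappears cleanly into $W'$, and (ii) choosing the right one-sided Taylor estimate for $\ln(1-\epsilon)$ to leave exactly the quadratic slack $\epsilon^{2}/2$ required on the right-hand side.
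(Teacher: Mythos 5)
Your proof is correct and follows essentially the same route as the paper: write $\Phi(n)=n\bigl(1-\tfrac{\epsilon P^*}{m}\bigr)^{T-n}$, bound it by $n\,e^{-\epsilon W'/m}$ via $1-x\le e^{-x}$, and then reduce to the elementary inequality $(1-\epsilon)^{-(1-\epsilon)}\le e^{\epsilon-\epsilon^2/2}$. The only difference is that the paper quotes this last inequality without proof, whereas you supply a (valid) derivative/power-series verification of it.
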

\begin{proof}
  By definition of $\Phi$ and $1-x\le e^{-x}$ for any $x$, we have
  \begin{align*}
    \frac{\Phi(n)}{(1-\epsilon)^{\frac{(1-\epsilon)W'}{m}}}
    =\frac{\sum_{i\in [n]}\bigl(1-\frac{\epsilon P^*}{m}\bigr)^{T-n}}{(1-\epsilon)^{(1-\epsilon)\frac{W'}{m}}}
    \le \frac{n\cdot e^{-\epsilon \frac{W'}{m}}}{(1-\epsilon)^{(1-\epsilon)\frac{W'}{m}}}.
  \end{align*}
  This is bounded by $n\cdot e^{-\frac{\epsilon^2 W'}{2m}}$ since $\frac{1}{(1-\epsilon)^{(1-\epsilon)}}\le e^{\epsilon-\epsilon^2/2}$ for any $\epsilon \in[0,1)$.
\end{proof}

Now we are ready to prove Theorem~\ref{thm:alg-main}.
By applying Lemmas~\ref{lem:potential_monotone} and~\ref{lem:potential_zero} to \eqref{eq:iid_bound1}, we see that 
\begin{align}
    \Pr\Bigl[\ALGucb \le (1-\epsilon)W'\Bigr]
    &\le \Pr\Bigl[\ALGucb \le (1-\epsilon)W' \mid \clean\Bigr] + \frac{1}{T}\\
    &\le \frac{\mathbb{E}[\Phi(T)\mid \clean]}{(1-\epsilon)^{(1-\epsilon)W'/m}} + \frac{1}{T}\\
    &\le \frac{\Phi(n)}{(1-\epsilon)^{(1-\epsilon)W'/m}} + \frac{1}{T}
    \le n\cdot e^{-\frac{\epsilon^2 W'}{2m}} + \frac{1}{T}.
\end{align}
This implies that
\begin{align}
    \label{eq:w-ealg}
      W'-\Expec{\ALGucb}
      &\leq \epsilon W' + (n\cdot e^{-\frac{\epsilon^2 W'}{2m}} + {1}/{T})W'
      \leq \epsilon W' + n\cdot e^{-\frac{\epsilon^2 W'}{2m}}W' + P^*.
\end{align}
This together with \eqref{eq:nrounds} and Lemmas~\ref{lem:surrogate regret} and~\ref{lem:diff UCB} implies that
\begin{align}
    R_T
    &\leq R^\mu_T + O(m\sqrt{T\ln T})\\
    &\leq m+W' - \Expec{\ALGucb} + O(m\sqrt{T\ln T}+err)\\
    &\leq m + \epsilon W' + n\cdot e^{-\frac{\epsilon^2 W'}{2m}}W' + P^* + {O}(m\sqrt{T\ln T}+\sqrt{\Const m^2 T} + \Const m\ln T).
    \label{eq:regret}
\end{align}

Let $T'\coloneqq T-n~(\geq e^{\frac{2m}{P^*}})$, and 
we set $\epsilon = \frac{\ln T'}{\sqrt{T'}}$.
Then it follows that
\begin{align}
    \epsilon W' + n\cdot e^{-\frac{\epsilon^2 W'}{2m}}W'
    &= P^*\sqrt{T'} \ln T'+nP^*T'^{1-\frac{P^*}{2m}\ln T'}
    \le P^*\sqrt{T'} \ln T'+nP^*.
\end{align}
Therefore, from Equation~\eqref{eq:regret}, we finally see that
\begin{align}
    R_T &\le
    m + P^*\sqrt{T'} \ln T'+nP^* + P^* + {O}(\sqrt{\Const m^2 T} + \Const m\ln T)  \\
    &= \begin{multlined}[t]
        O\Bigl(\frac{m}{n}\sqrt{T} \ln T 
        + \sqrt{\Const m^2 T} + \Const m\ln T \Bigr).
    \end{multlined}
\end{align}
This completes the proof of Theorem~\ref{thm:alg-main}.

\subsection{Another Regret Bound}\label{sec:another regret bound}
In this subsection, we show that Algorithm~\ref{alg:proposed-ucb} has a regret bound of $O(m\sqrt{T\ln n/n}+err)$ even under a weaker assumption of $T\geq n$.

\begin{theorem}\label{thm:another}
    If $T\geq n$, the regret $R_T$ for Algorithm \ref{alg:proposed-ucb} with $\varepsilon = \sqrt{n \ln n/T}$ is bounded as
\[
    R_T 
    = O\left(m\sqrt{\frac{T\ln n}{n}} + m\sqrt{T \ln T} + err\right),
\]
where $err=O(\sqrt{\Const m^2 T}+\Const m\ln T)$ and $\Const=\Theta(\ln(mnT))$.
\end{theorem}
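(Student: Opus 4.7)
The plan is to follow the same scheme as in the proof of Theorem~\ref{thm:alg-main}, but to replace the Chernoff/Markov-style bound on $\Expec{W' - \ALGucb}$ by a sharper \emph{deterministic} estimate derived directly from the monotonicity of the potential (Lemma~\ref{lem:potential_monotone}). The reason the original argument needs strengthening is that the Chernoff factor $ne^{-\epsilon^2 W'/(2m)}$ only becomes negligible once $\epsilon^2 W'/(2m)\gtrsim \ln(nT)$, which in turn forces the assumption $T\gtrsim e^{2m/P^*}$ used in Theorem~\ref{thm:alg-main}. To dispense with that assumption, I will exploit $\Phi(T)\le \Phi(n)$ pathwise on the clean execution event $\clean$.

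By Lemma~\ref{lem:surrogate regret}, Lemma~\ref{lem:diff UCB}, and~\eqref{eq:nrounds}, it suffices to bound $\Expec{W'-\ALGucb}$. In any clean execution, Lemma~\ref{lem:potential_monotone} gives $\Phi(T)\le \Phi(n) = n(1-\epsilon P^*/m)^{T-n}$, while the definition $\Phi(T)=\sum_{i\in[n]} (1-\epsilon)^{\bar{U}^T_i/m}$, with $\bar{U}^T_i=\sum_{t=n+1}^T \bar{X}^t_i$, gives the lower bound $\Phi(T)\ge (1-\epsilon)^{\ALGucb/m}$ since this is the single largest term in the sum. Taking logarithms of $(1-\epsilon)^{\ALGucb/m}\le n(1-\epsilon P^*/m)^{T-n}$ and dividing by the negative quantity $\ln(1-\epsilon)$ yields
\begin{align}
\ALGucb \;\ge\; \frac{m(T-n)\bigl(-\ln(1-\epsilon P^*/m)\bigr)}{-\ln(1-\epsilon)} \;-\; \frac{m\ln n}{-\ln(1-\epsilon)}.
\end{align}
The elementary inequalities $-\ln(1-x)\ge x$ and $-\ln(1-x)\le x/(1-x)$, valid for $x\in[0,1)$, imply that the first fraction is at least $(T-n)P^*(1-\epsilon)=W'(1-\epsilon)$ and the second is at most $m\ln n/\epsilon$, giving
\begin{align}
W'-\ALGucb \;\le\; \epsilon W' + \frac{m\ln n}{\epsilon} \qquad \text{on } \clean.
\end{align}

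Substituting $\epsilon=\sqrt{n\ln n/T}$ and $W'\le P^*T\le mT/n$, both summands equal $m\sqrt{T\ln n/n}$, so $W'-\ALGucb\le 2m\sqrt{T\ln n/n}$ on $\clean$. A non-clean execution (probability at most $1/T$) contributes at most $W'/T\le P^*\le m/n$ to the expectation. Combined with~\eqref{eq:nrounds} and Lemmas~\ref{lem:surrogate regret} and~\ref{lem:diff UCB}, this produces $R_T = O(m\sqrt{T\ln n/n} + m\sqrt{T\ln T} + err)$. The degenerate regime $T\le n\ln n$, in which $\epsilon\ge 1$ and the log-based arguments break, is handled separately by the trivial bound $R_T\le \Expec{\OPT}\le \OPT_\mu\le P^*T\le mT/n$; this already gives the desired estimate because $mT/n\le m\sqrt{T\ln n/n}$ is precisely $T\le n\ln n$.

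The main technical hurdle will be the derivation of the deterministic inequality $W'-\ALGucb\le \epsilon W' + m\ln n/\epsilon$ on $\clean$. Two \emph{opposing} Taylor estimates for $-\ln(1-\cdot)$ must be applied in the correct direction, so that the ratio $(-\ln(1-\epsilon P^*/m))/(-\ln(1-\epsilon))$ is lower-bounded by $(P^*/m)(1-\epsilon)$; this preserves the $P^*$-dependence in the main term while confining the factor $n$ to the $\ln n/\epsilon$ term, which is exactly what enables the $1/n$ improvement in the final regret bound.
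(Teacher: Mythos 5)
Your proposal is correct and follows essentially the same route as the paper: both exploit the pathwise monotonicity $\Phi(T)\le\Phi(n)$ on the clean event, lower-bound $\Phi(T)$ by its largest summand $(1-\epsilon)^{\ALGucb/m}$, take logarithms, and use elementary estimates on $\ln(1-x)$ to reach $W'-\ALGucb\le \epsilon W' + m\ln n/\epsilon$ before plugging in $\epsilon=\sqrt{n\ln n/T}$. Your explicit treatment of the regime $T\le n\ln n$ (where $\epsilon\ge 1$) via the trivial bound $R_T\le P^*T\le mT/n\le m\sqrt{T\ln n/n}$ is a small but genuine tightening of a corner case the paper's proof passes over silently.
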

\begin{proof}
Let $i^*\in\argmin_{i}\sum_{t=n+1}^T\bar{X}_{i}^{t}$.
Under the clean execution $\mathcal{E}$,
we have
\begin{align}
	(1 - \epsilon)^{
		\frac{1}{m} \sum_{t=n+1}^T \bar{X}_{i^*}^{t}}
	\le
	\sum_{i \in [n]}
	(1 - \epsilon)^{\frac{1}{m} \sum_{t=n+1}^T \bar{X}_i^{t}}
	=
	\Phi(T)
	\le
	\Phi(n)
	=
	n
	\left(
		1 - \frac{\epsilon P^*}{m}
	\right)^{T-n},
\end{align}
where the inequality follows from Lemma~\ref{lem:potential_monotone}.
By taking the logarithm of both sides,
we have
\begin{align}
	\frac{1}{m} \sum_{t=n+1}^T \bar{X}_{i^*}^{t}
	\ln (1-\epsilon)
	\le
	(T-n) 
	\ln
	\left(
		1 - \frac{\epsilon P^*}{m}
	\right)
	+
	\ln n.
\end{align}
As we have
$
- x -x^2 \le \ln(1-x)
\le -x
$
for $x \le 1/2$,
we obtain
\begin{align}
	\left(
		-
		\epsilon
		-
		\epsilon^2
	\right)
	\frac{1}{m} 
	\sum_{t=n+1}^T \bar{X}_{i^*}^{t}
	&\le
	-
	(T-n)
	\frac{\epsilon P^*}{m}
	+
	\ln n 
	=
	-
	\frac{\epsilon W'}{m}
	+
	\ln n.
\end{align}
Therefore, under the clean execution $\clean$, it follows that
\begin{align}\label{eq:diff-realization}
	W'
	-
	\overline{\mathrm{ALG}'}
	&=
	W'
	-
	\sum_{t=n+1}^T \bar{X}_{i^*}^{t} \\
	&\le
	\epsilon
	\sum_{t=n+1}^T \bar{X}_{i^*}^{t}
	+
	\frac{m}{\epsilon} \ln n \\
	&\le
	\frac{m}{n} \epsilon T
	+
	\frac{m}{\epsilon} \ln n
    =2m\sqrt{\frac{T\ln n}{n}},
\end{align}
where the last equality holds if we set $\epsilon=\sqrt{n\ln n/T}$.
Since $\clean$ occurs with probability at least $1-1/T$, we have
\begin{align}
    W'-\Expec{\ALGucb}
    &\leq 2m\sqrt{\frac{T\ln n}{n}} + P^*(T-n)/T 
    = O\left(m\sqrt{\frac{T\ln n}{n}}\right). \label{eq-another-regret}
\end{align}
By applying~\eqref{eq-another-regret} to the proof of Theorem~\ref{thm:alg-main} instead of~\eqref{eq:w-ealg}, 
we obtain $R_T=O(m\sqrt{T \ln n / n}+m\sqrt{T\ln T}+err)$.
\end{proof}

\begin{remark}\label{remark:per-round}
    We observe the outcome of Algorithm~\ref{alg:proposed-ucb} almost achieves per-round fairness on average across rounds.
    Here we mean per-round fairness by attaining the maximum expected minimum utility per round with a stochastic allocation, whose value is bounded by $P^*$.
    Indeed, \eqref{eq:regret-alg} and~\eqref{eq-another-regret} imply that, we have 
    \begin{align}
        P^* - \Expec{\min_{i\in [n]} \frac{1}{T}\sum_{t=1}^T X^t_i} 
        &\leq P^*-\frac{1}{T}\Expec{\ALG'_\mu}+ \frac{1}{T}O(m\sqrt{T\ln T})\\
        &\leq \frac{1}{T}\left(W'-\Expec{\ALGucb} +nP^* +err + O(m\sqrt{T\ln T}) \right)
    \end{align}
    and this is $o(1)$ when $m$ and $n$ are fixed.
\end{remark}

\subsection{Extension to Matroid Constraints}\label{sec:matroid}
In this subsection, we explain that almost the same regret bounds in Theorems~\ref{thm:alg-main} and~\ref{thm:another} hold even for a setting where an allocation in each round must satisfy a matroid constraint for agents.

For a finite ground set $U$ and a family $\mathcal{I} \subseteq 2^U$ of subsets, the pair $(U, \mathcal{I})$ is called a \emph{matroid} if the following properties are satisfied: (i) $\emptyset \in \mathcal{I}$, (ii) $X\subseteq Y\in\mathcal{I}$ implies $X\in\mathcal{I}$, and (iii) if $X,Y\in\mathcal{I}$ and $|X|<|Y|$ then there exists $e\in Y\setminus X$ such that $X\cup\{e\}\in\mathcal{I}$.
For a matroid $(U, \mathcal{I})$, a member of $\mathcal{I}$ is called an \emph{independent set}, and a function $r\colon 2^U\to\mathbb{Z}_+$ such that $r(X)=\max\{|X'|\mid X'\subseteq X,\,X'\in\mathcal{I}\}$ for any $X\subseteq U$ is called the \emph{rank} function.
See e.g., a textbook~\cite{schrijver2003combinatorial} for the basic results on matroids.

Let $\mathcal{M}$ be a matroid over $[n] \times M$, and let $r$ be the rank function of $\mathcal{M}$.
We suppose that we can identify the independence of an item set by an oracle in a constant time.
Each agent's bundle in each round must be an independent set of $\mathcal{M}$.
Then the set $\cA$ of feasible allocations is 
$$\cA=\Bigl\{a \in \{0,1\}^{n \times m} \, : \, \sum_{i\in [n]} a_{ie}\leq 1~(\forall e\in M), \ \sum_{(i,e)\in Y} a_{ie} \leq r(Y)~(\forall Y \subseteq [n] \times M)\Bigr\}.$$
Examples of this setting include situations that each agent can receive at most $K$ items in one round, and that items are categorized and each agent can receive at most $K_j$ items from the $j$-th category.

For this setting, we modify Algorithm~\ref{alg:proposed-ucb} as follows.
At the beginning, the algorithm spends at most $mn$ rounds to obtain one sample for each $(i,e)$, which can be done by allocating one item to one agent in each round.
In line~\ref{line:choice}, the algorithm chooses an allocation $a^t$ that maximizes $\sum_{(i,e)\in [n]\times M} (1-\epsilon)^{\frac{1}{m}u^{t-1}_i} \bar{v}_{ie}^{t} \cdot a_{ie}$ subject to $a \in \cA$.
This can be done efficiently by using a classic algorithm to solve the maximum-weight common independent set problem.

In analysis, we modify \eqref{eq:LPE} to replace the second constraint with the constraints in $\cA$. 
Since $\cA$ is the set of common independent sets of two matroids (namely, a partition matroid and $\mathcal{M}$), the feasible region of~\eqref{eq:LPE} without the first constraint is still an integral polytope.
Therefore, by adapting the analysis, we observe that $R_T = O(m^2 + m\sqrt{T}\ln T /n + err)$ when $T \geq mn+e^{\frac{2m}{P^*}}$, and $R_T=O(m^2+m\sqrt{T \ln n / n}+m\sqrt{T\ln T}+err)$ when $T\geq mn$.

\section{Lower Bound}

In this section, we prove a lower bound on $R^\mu_T$ and $R_T$.
\begin{theorem}
\label{thm-lowerbound}
    For bandit max-min fair allocation problem with $m\geq n$, 
    the surrogate regret $R^{\mu}_T$ of any algorithm is at least $\Omega(m\sqrt{T}/n)$.

    If $T\geq \max\{n, m^2\} \geq 2$ and $m/n \geq \lceil 2338 \ln T \rceil$ in addition, then the regret $R_T$ is also at least $\Omega(m\sqrt{T}/n)$.
\end{theorem}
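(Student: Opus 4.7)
The plan is to follow the Auer--Cesa-Bianchi--Freund--Schapire (ACFS) information-theoretic template for stochastic bandit lower bounds, but split the problem into $k=\lfloor m/n\rfloor$ disjoint item-blocks of size $n$ and apply an $\Omega(\sqrt{T})$ bound to each, summing to $\Omega(m\sqrt{T}/n)$. The hard instance family is: partition $M=M_1\sqcup\cdots\sqcup M_k$, fix a designated item $e_j^\star\in M_j$ in each block, and for a hidden parameter $\theta=(\theta_1,\dots,\theta_k)\in [n]^k$ set every $D_{ie}=\Ber{1/2}$ except $D_{\theta_j,e_j^\star}=\Ber{1/2+\epsilon}$, with $\epsilon=\Theta(n/\sqrt{T})$ tuned at the end. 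By symmetry, the LP optimum of~\eqref{eq:LPE} assigns $e_j^\star$ to agent $\theta_j$ in every round and spreads the remaining items evenly.

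For each block $j$, averaging over the uniform prior on $\theta_j\in[n]$ and using $\KLD{\Ber{1/2}}{\Ber{1/2+\epsilon}}=O(\epsilon^2)$ together with Pinsker's inequality gives the standard ACFS-style bound that the expected number of rounds in which the algorithm allocates $e_j^\star$ to the correct $\theta_j$ is at most $T/n+O(\epsilon T\sqrt{T/n})$. Each failure round costs $\Omega(\epsilon)$ in the surrogate expected utility of the distinguished agent of block $j$, so the surrogate-regret contribution from that block is $\Omega(\epsilon T/n)=\Omega(\sqrt{T})$. Summing across the $k$ independent blocks yields $R^\mu_T=\Omega(m\sqrt{T}/n)$.

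To transfer this to $R_T$ under the hypotheses $T\geq\max\{n,m^2\}$ and $m/n\geq\lceil 2338\ln T\rceil$, Lemma~\ref{lem:surrogate regret}'s generic $O(m\sqrt{T\ln T})$ slack is too loose. Instead, I would redo the concentration step of that lemma specifically for the constructed Bernoulli instance: each per-round fluctuation $\sum_e(v^t_{ie}-\mu_{ie})a^t_{ie}$ has conditional variance at most $1/4$ times the number of items allocated to agent $i$ in that round, and any algorithm that delivers much fewer than $mT/n$ items to some agent already suffers $\Omega(mT/n)$ regret by the max-min structure. Restricting to ``balanced'' algorithms, Bernstein's inequality then gives $|U_i-P_i|=O(\sqrt{mT\ln(nT)/n})$ with high probability, and the hypotheses on $T$ and $m/n$ are exactly what is needed to ensure $\sqrt{mT\ln(nT)/n}=o(m\sqrt{T}/n)$, so the surrogate bound transfers to $R_T$.

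\textbf{Main obstacle.} The principal difficulty is that the max-min objective couples the $k$ otherwise-independent item-blocks: $\min_i X_i$ is not a sum of per-block minima, so the per-block Pinsker estimates do not aggregate automatically. I expect to resolve this by choosing the hidden parameter $\theta$ so that under the uniform prior the failures in different blocks pile up on a single ``target'' agent whose cumulative utility is driven down by $\Omega(k\sqrt{T})$, for instance via a cyclic or combinatorial placement of the distinguished agent across blocks. Verifying that the explicit constant $2338$ in $m/n\geq\lceil 2338\ln T\rceil$ is actually tight enough for the Bernstein-based transfer to go through is the other spot where careful accounting of constants will be needed.
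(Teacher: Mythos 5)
Your high-level template (block decomposition plus an ACFS/Pinsker argument per block, aggregated via $\min_i \le \frac1n\sum_i$) is the same as the paper's, but your hard instance is quantitatively too easy and cannot yield $\Omega(m\sqrt T/n)$. In your construction each block of $n$ items contains only \emph{one} distinguished item $e_j^\star$, so the total ``bonus mass'' available per round is $k\epsilon$ with $k=m/n$, and hence even an algorithm that misses every correct assignment loses at most $k\epsilon T/n$ in the surrogate max--min objective (a round-robin algorithm that ignores the bonuses entirely already secures $\approx mT/(2n)$ per agent). For the information-theoretic step to force $\Omega(T)$ failure rounds you need the bound $T/n+O(\epsilon T\sqrt{T/n})\le T/2$, i.e.\ $\epsilon=O(\sqrt{n/T})$; with that tuning the instance gives at most $k\epsilon T/n=O(m\sqrt{T}/n^{3/2})$, a factor $\sqrt n$ short. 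Your stated choice $\epsilon=\Theta(n/\sqrt T)$ would restore the right bonus mass but makes the quoted bound on correct assignments vacuous ($\epsilon T\sqrt{T/n}\gg T$): a UCB learner identifies $\theta_j$ after $O(n/\epsilon^2)=O(T/n^2)$ samples per candidate and exploits thereafter, so the regret on your instance is genuinely $o(m\sqrt T/n)$. The fix, which is what the paper does, is to give \emph{every} item a distinguished agent: in each block the means are $\Ber{1/2+\epsilon\alpha_{i,j,k}}$ for a hidden permutation (an element of $\As$), so the bonus mass is $m\epsilon$ per round while the learner must recover $n$ values per block rather than one; then $\epsilon=\Theta(1/\sqrt T)$ simultaneously defeats learning and yields $\Omega(m\sqrt T/n)$. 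Relatedly, your plan to pile all distinguished agents onto a single target is counterproductive: the benchmark $\OPT_\mu$ then gains almost nothing from the bonuses either, since the max--min LP just rebalances the plain items. The paper instead averages over all permutations, and the single step $\min_i\le\frac1n\sum_i$ already decomposes the objective across blocks, so the ``coupling'' obstacle you flag does not arise.

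Your treatment of the second part (transferring to $R_T$) is also off target. No concentration argument is needed on the algorithm side: since $\Expec{\min_i U_i}\le\frac1n\sum_i\Expec{U_i}=\frac1n\sum_i\Expec{P_i}$, the same averaging upper bound proved for $\ALG_\mu$ applies verbatim to the realized $\ALG$, so there is no need to restrict to ``balanced'' algorithms or to invoke Bernstein for adaptive allocations (a restriction you would in any case have to justify uniformly over all algorithms). The only quantity requiring concentration is $\OPT_\mu-\Expec{\OPT}$ for the \emph{fixed} optimal policy, where a plain Chernoff bound over the $bT$ Bernoulli rewards of each agent suffices; the hypotheses $T\ge\max\{n,m^2\}$ and $m/n\ge\lceil 2338\ln T\rceil$ are calibrated precisely so that $n\exp(-c^2b/2)\cdot bT$ is negligible against $b\sqrt T$, with $2338\approx 2/c^2$ for $c=\tfrac1{32}-\tfrac1{500}$.
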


\subsection{Lower Bound for \texorpdfstring{$R^{\mu}_T$}{R{mu}T}}

We first prove the first part of Theorem~\ref{thm-lowerbound} (a lower bound on $R^{\mu}_T$) for any deterministic algorithm based on the idea of \cite{ACFS2002,audibert2014regret}, and then extend the proof to any randomized algorithm.
In the following, we use $\ALG$ to denote both an algorithm and its reward.

Fix any deterministic algorithm.
Let $b$ be a positive integer and $m=nb$. 
We use two index
$(j,k)$ ($j=1,\ldots,n$ and $k=1,\ldots,b$)
to represent one item $e$.
An item $(j,k)$ is called the $j$-th item in the $k$-th item block $k$.
Then, we can write the set of allocations as follows:
\[
    \cA= \left\{
        a \in \{0, 1\}^{n \times n \times b}:
            \sum_i a_{i,j,k}=1 \text{ for } \forall j,k
    \right\}.
\]
Similarly, we define the set of optimal allocations as follows:
\begin{align}
    \As= \left\{
        a \in \{0, 1\}^{n \times n \times b} \, : \,
        \begin{array}{l}
        \sum_i a_{i,j,k}=1 \text{ for } \forall j,k\\
            \sum_j a_{i,j,k}=1 \text{ for } \forall i,k
        \end{array}
    \right\}.
\end{align}
For any $\alpha\in\As$, $j\in [n]$ and $k\in [b]$,
let $I_{\alpha,j,k}$ be the unique $i$ such that $\alpha_{i,j,k}=1$.

Now we design a hard instance for the problem.
Let $\epsilon \in (0,1)$ be a parameter. 
We first choose $\alpha\in\As$ arbitrarily and
set a distribution $D_{i,j,k}$ (of agent $i$ for item $(j,k)$) to be a Bernoulli distribution $\Ber{1/2 + \epsilon \alpha_{i,j,k}}$.
We refer to $(\Ber{1/2 + \epsilon \alpha_{i,j,k}})_{i,j,k}$ as $\alpha$-adversary.
Moreover, for each $\alpha\in\As$ and $k'\in[b]$,
we also define another adversary called $(\alpha-k')$-adversary as follows:
$D_{i,j,k}=\Ber{1/2}$ if $k=k'$, and
$D_{i,j,k}=\Ber{1/2 + \epsilon \alpha_{i,j,k}}$ otherwise.
Note that for an allocation $\beta \in \cA^*$, the $(\alpha-k')$-adversary is the same as the $(\beta-k')$-adversary
if
$\alpha_{i,j,k}=\beta_{i,j,k}$
for each
$i\in[n]$, $j\in[n]$ and $k\in[b]\setminus\{k'\}$.
When we use an $\alpha$-adversary,
for each agent $i$ and each item $(j,k)$, we say that 
$(i,j,k)$ is a \emph{correct assignment} if $\alpha_{i,j,k}=1$.
We use
$\Pundertwo{\alpha}{\cdot}$
and
$\Eunder{\alpha}{\cdot}$
to denote the conditional probability and expectation
when we choose an $\alpha$-adversary at first.

Let $\alpha^*\in \cA^*$ be the most unfavorable adversary that minimize the reward.
Let $\mu_{i,j,k}$ be the expected value of each $D_{i,j,k}$.
We denote $N_{\alpha,k}\coloneqq \sum_{t,i,j}\alpha_{i,j,k} a^t_{i,j,k}$. Then we have
\begin{align}
    \Eunder{\alpha^*}{\ALG_{\mu}}
    &= \Eunder{\alpha^*}{\min_i\sum_{t,j,k} \mu_{i,j,k} a^t_{i,j,k} } \\
    &\leq \frac{1}{|\As|} 
        \sum_{\alpha \in \As}
        \Eunder{\alpha}{ \min_i\sum_{t,j,k} \mu_{i,j,k} a^t_{i,j,k}} \\
    &\leq \frac{1}{n|\As|} 
        \sum_{\alpha \in \As}
        \Eunder{\alpha}{ \sum_{t,i,j,k} \mu_{i,j,k} a^t_{i,j,k} } \\
    &= \frac{1}{n|\As|}
        \sum_{\alpha \in \As}
        \Eunder{\alpha}{ \sum_{t,i,j,k} \left( \frac12 + \epsilon\alpha_{i,j,k} \right) a^t_{i,j,k} } \\
    &= \frac12 bT + \frac{\epsilon}{n|\As|}
        \sum_{\alpha\in\As} \sum_{k=1}^b 
        \Eunder{\alpha}{\sum_{t,i,j} \alpha_{i,j,k} a^t_{i,j,k}} \\
    &= \frac12 bT + \frac{\epsilon}{n|\As|}
        \sum_{k=1}^b \sum_{\alpha\in\As} 
        \Eunder{\alpha}{N_{\alpha,k}}.
        \label{eq:ALG-ub}
\end{align}

Next, we show the following lemma.
\begin{lemma}
\label{lem-A1}
For each $0 < \epsilon \leq 1/4$,
\[
    \Eunder{\alpha}{N_{\alpha,k}}
    \leq \Eunder{\alpha-k}{N_{\alpha,k}}
        + 2\epsilon nT \sqrt{ \Eunder{\alpha-k}{N_{\alpha,k}} }.
\]
\end{lemma}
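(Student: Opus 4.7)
The plan is to apply the standard information-theoretic reduction from bandit lower bounds to KL divergence on trajectory distributions. Write $\Pr_\alpha$ and $\Pr_{\alpha-k}$ for the distributions over the full interaction trajectory $H=(a^1, (v^1_{i,j,k})_{a^1_{i,j,k}=1}, \dots, a^T, (v^T_{i,j,k})_{a^T_{i,j,k}=1})$ induced by the $\alpha$-adversary and the $(\alpha-k)$-adversary, respectively; these distributions are well-defined since the algorithm is deterministic. Observe that $N_{\alpha,k} = \sum_{t,i,j} \alpha_{i,j,k}\, a^t_{i,j,k}$ is a measurable function of $H$ taking values in $[0, nT]$, since each round contributes at most $n$ correct block-$k$ assignments (the items in block $k$ are assigned once per round, and only $n$ pairs $(i,j)$ satisfy $\alpha_{i,j,k}=1$). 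Hence the standard bound $\Eunder{\alpha}{Z}-\Eunder{\alpha-k}{Z} \le (\sup Z-\inf Z)\cdot \mathrm{TV}(\Pr_\alpha,\Pr_{\alpha-k})$ gives
\[
    \Eunder{\alpha}{N_{\alpha,k}} - \Eunder{\alpha-k}{N_{\alpha,k}} \le nT \cdot \mathrm{TV}(\Pr_\alpha, \Pr_{\alpha-k}),
\]
reducing the task to controlling the total variation distance.

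Next, I would apply Pinsker's inequality $\mathrm{TV}(\Pr_\alpha,\Pr_{\alpha-k}) \le \sqrt{\KLD{\Pr_{\alpha-k}}{\Pr_\alpha}/2}$ and expand the KL via the chain rule along rounds $t=1,\dots,T$. Conditional on the history $H^{t-1}$, both adversaries force the algorithm (which is deterministic) to play the same $a^t$, and they induce identical conditional distributions on the feedback for every item outside block $k$, so those observations contribute zero to the conditional KL. Within block $k$, for $(i,j)$ with $\alpha_{i,j,k}=0$ the feedback is $\Ber{1/2}$ under both adversaries (again zero contribution), while for $(i,j)$ with $\alpha_{i,j,k}=1$ it is $\Ber{1/2}$ under $\alpha-k$ versus $\Ber{1/2+\epsilon}$ under $\alpha$. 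Independence of feedback across $(i,j)$ (conditional on $H^{t-1}$) lets us sum these per-coordinate KLs, and taking the outer expectation under $\Pr_{\alpha-k}$ yields
\[
    \KLD{\Pr_{\alpha-k}}{\Pr_\alpha} = \KLD{\Ber{1/2}}{\Ber{1/2+\epsilon}} \cdot \Eunder{\alpha-k}{N_{\alpha,k}}.
\]

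To finish, I would compute $\KLD{\Ber{1/2}}{\Ber{1/2+\epsilon}} = \tfrac12 \ln\tfrac{1}{1-4\epsilon^2}$, and for $\epsilon \le 1/4$ use $-\ln(1-y) \le y/(1-y)$ with $y=4\epsilon^2 \le 1/4$ to get the bound $\tfrac{1}{2}\cdot\tfrac{4y}{3} = \tfrac{8\epsilon^2}{3}$. Plugging into Pinsker gives $\mathrm{TV}(\Pr_\alpha,\Pr_{\alpha-k}) \le \sqrt{\tfrac{4\epsilon^2}{3}\,\Eunder{\alpha-k}{N_{\alpha,k}}} \le \tfrac{2\epsilon}{\sqrt{3}}\sqrt{\Eunder{\alpha-k}{N_{\alpha,k}}}$, and multiplying by $nT$ yields
\[
    \Eunder{\alpha}{N_{\alpha,k}} - \Eunder{\alpha-k}{N_{\alpha,k}} \le \frac{2\epsilon}{\sqrt{3}}\, nT\, \sqrt{\Eunder{\alpha-k}{N_{\alpha,k}}} \le 2\epsilon nT \sqrt{\Eunder{\alpha-k}{N_{\alpha,k}}},
\]
which is the claimed inequality.

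The one subtle point is the chain-rule identity, which I would justify by an explicit induction on $t$: since $a^t$ is the same deterministic function of $H^{t-1}$ under both measures, the Radon--Nikodym derivative $d\Pr_{\alpha-k}/d\Pr_\alpha$ restricted to $H^t$ factors as the product of per-round likelihood ratios of the Bernoulli feedbacks, and only the coordinates with $a^t_{i,j,k}=1$ and $\alpha_{i,j,k}=1$ carry nontrivial ratios. Everything else is a routine computation.
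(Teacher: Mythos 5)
Your proposal is correct and follows essentially the same route as the paper: bound the difference of expectations by $nT$ times the total variation distance, apply Pinsker's inequality, expand the KL divergence by the chain rule over rounds, and observe that only the correctly-assigned block-$k$ items contribute $\KLD{\Ber{1/2}}{\Ber{1/2+\epsilon}} = \tfrac12\ln\tfrac{1}{1-4\epsilon^2}$ each. The only difference is cosmetic: you invoke additivity of KL over conditionally independent Bernoulli coordinates directly, whereas the paper verifies the same identity by an explicit combinatorial summation in its Claim on $K'_t$.
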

\begin{proof}
Let $\sigma^t\in\{0,1\}^{n\times b}$ denote the feedback that
the algorithm observes at round $t$, i.e.,
the $(j,k)$ entry of $\sigma^t$ is $v^t_{i',j,k}$
where $i'$ is the agent who receives $(j,k)$.
In addition, for $t=1,\ldots,T$,
we denote by 
$S_t = (\sigma^1,\ldots,\sigma^t) \in \{0,1\}^{n\times b\times t}$
all feedback observed up to round $t$.
In the rest of the proof,
we use the following notation on the KL-divergence:
\begin{align*}
    K_t &\coloneqq \sum_{S_t \in \{0,1\}^{n\times b\times t}}
        \Pundertwo{\alpha-k}{S_t}
        \ln \frac{\Pundertwo{\alpha-k}{S_t}}{\Pundertwo{\alpha}{S_t}}, \\
    K'_t &\coloneqq \sum_{S_t \in \{0,1\}^{n\times b\times t}}
        \Pundertwo{\alpha-k}{S_t}
        \ln \frac{\Pundertwo{\alpha-k}{\sigma^t|S_{t-1}}}{\Pundertwo{\alpha}{\sigma^t|S_{t-1}}}.
\end{align*}
By the chain rule, we have $K_T=\sum_{t=1}^T K'_t$.
Since the algorithm is assumed to be deterministic,
we can treat $N_{\alpha,k}$ as a function $f$ of $S_T$.
Then, the following holds:
\begin{align}
    \Eunder{\alpha}{N_{\alpha,k}}-\Eunder{\alpha-k}{N_{\alpha,k}}
    &= \Eunder{\alpha}{f(S_T)}-\Eunder{\alpha-k}{f(S_T)} \\
    &= \sum_{S_T} 
        f(S_T)(\Pundertwo{\alpha}{S_T}
        -\Pundertwo{\alpha-k}{S_T}) \\
    &\leq \sum_{S_T:\Pundertwo{\alpha}{S_T}>\Pundertwo{\alpha-k}{S_T}}
        f(S_T)(\Pundertwo{\alpha}{S_T}
        -\Pundertwo{\alpha-k}{S_T}) \\
    &\leq nT \sum_{S_T:\Pundertwo{\alpha}{S_T}>\Pundertwo{\alpha-k}{S_T}}
        (\Pundertwo{\alpha}{S_T}
        -\Pundertwo{\alpha-k}{S_T}) \\
    &= \frac{nT}{2} \sum_{S_T}
        |\Pundertwo{\alpha}{S_T}
        -\Pundertwo{\alpha-k}{S_T}| \\
    &\leq \frac{nT}{2} \sqrt{ 2K_T } = \frac{nT}{2} \sqrt{ 2\sum_{t=1}^T K'_t },
    \label{eq-EN-EN-leq-nT2-sqrt-k't}
\end{align}
where $N_{\alpha,k}\coloneqq f(S_T)$,
the first inequality is due to $N_{\alpha,k}\leq nT$
and the last inequality is due to the Pinsker's inequality.
$K'_t$ is computed as follows.
\begin{claim}
\label{claim-k't}
    Fix any $S_{t-1}$ and let $P(S_{t-1})$ be the number of correct assignments $(i,j,k')$ in $a^t$ such that $k'=k$, i.e.,
    $
        P(S_{t-1})\coloneqq\sum_{i,j=1}^n \alpha_{i,j,k}a^t_{i,j,k}.
    $
    Then, we have
    \begin{equation}
    \label{eq:k't}
         K'_t=\frac12\ln\frac1{1-4\epsilon^2} \Eunder{\alpha-k}{P(S_{t-1})}.
    \end{equation}
\end{claim}
If this claim holds, by substituting the above \eqref{eq:k't} into \eqref{eq-EN-EN-leq-nT2-sqrt-k't}, it follows that
\begin{align*}
    \Eunder{\alpha}{N_{\alpha,k}}-\Eunder{\alpha-k}{N_{\alpha,k}} 
    &\leq \frac{nT}{2} \sqrt{ 2\sum_{t=1}^T K'_t } \\
    &\leq \frac{nT}{2} \sqrt{
        \ln\frac1{1-4\epsilon^2} \sum_{t=1}^T
            \Eunder{\alpha-k}{P(S_{t-1})}
    } \\
    &=\frac{nT}{2} \sqrt{
        \ln\frac1{1-4\epsilon^2} \Eunder{\alpha-k}{N_{\alpha,k}}
    } \\
    &\leq 2\epsilon nT \sqrt{\Eunder{\alpha-k}{N_{\alpha,k}}},
\end{align*}
where the last inequality follows from the convexity of $-\ln(1-x)$ and $0<\epsilon\leq 1/4$.
Hence, in the remainder, we show Claim~\ref{claim-k't}.
\begin{proof}[Proof of Claim~\ref{claim-k't}]
To begin with, we fix any $\sigma^t\in\{0,1\}^{n\times b}$ and find $\Pundertwo{\alpha}{\sigma^t|S_{t-1}}$ and $\Pundertwo{\alpha-k}{\sigma^t|S_{t-1}}$.
For $h=0,1$, define the following parameters:
\begin{itemize}
    \item $k_h(\sigma^t)$: The number of correct assignments $(i,j,k')$ in $a^t$
    such that $k'=k$ and the algorithm observes feedback $h$, i.e., the $(j,k')$-entry of $\sigma^t$ is $h$,
    \item $l_h(\sigma^t)$: The number of correct assignments $(i,j,k')$ in $a^t$
    such that $k'\neq k$ and the algorithm observes feedback $h$, i.e., the $(j,k')$-entry of $\sigma^t$ is $h$.
\end{itemize}
Note that $P(S_{t-1})=k_0(\sigma^t)+k_1(\sigma^t)$.
When $\sigma^t$ is clear from the context,
we abbreviate $k_h(\sigma^t)$ and $l_h(\sigma^t)$ as $k_h$ and $l_h$ respectively.
Then, we get
\begin{align*}
    \Pundertwo{\alpha-k}{\sigma^t|S_{t-1}}
    &= \left(\frac12\right)^{nb-(l_0+l_1)}\left(\frac12-\epsilon\right)^{l_0}\left(\frac12+\epsilon\right)^{l_1}, \\
    \Pundertwo{\alpha}{\sigma^t|S_{t-1}}
    &= \left(\frac12\right)^{nb-(k_0+k_1+l_0+l_1)}\left(\frac12-\epsilon\right)^{k_0+l_0}\left(\frac12+\epsilon\right)^{k_1+l_1},
\end{align*}
and
\begin{align*}
\MoveEqLeft
    \Pundertwo{\alpha-k}{\sigma^t|S_{t-1}} 
        \ln \frac{\Pundertwo{\alpha-k}{\sigma^t|S_{t-1}}}
            {\Pundertwo{\alpha}{\sigma^t|S_{t-1}}}\\
    &= \left(\frac12\right)^{nb-(l_0+l_1)}
        \left(\frac12-\epsilon\right)^{l_0}
        \left(\frac12+\epsilon\right)^{l_1}
        \ln \frac{(1/2)^{k_0+k_1}}{(1/2-\epsilon)^{k_0}(1/2+\epsilon)^{k_1}}.
\end{align*}

\noindent
Next, we compute the value
\begin{align*}
    value
    \coloneqq \sum_{\sigma^t\in\{0,1\}^{n\times b}}
        \Pundertwo{\alpha-k}{\sigma^t|S_{t-1}} 
            \ln \frac{\Pundertwo{\alpha-k}{\sigma^t|S_{t-1}}}
                {\Pundertwo{\alpha}{\sigma^t|S_{t-1}}}
\end{align*}
for each fixed $S_{t-1}$.
Since $\ALG$ is deterministic, 
$P(S_{t-1})\coloneqq k_0+k_1$ and $Q(S_{t-1})\coloneqq l_0+l_1$ are determined only by $S_{t-1}$.
$P(S_{t-1})$ and $Q(S_{t-1})$ are abbreviated by $P$ and $Q$ respectively
when $S_{t-1}$ is fixed.
The number of $\sigma' \in \{0,1\}^{n\times b}$
such that $k_0(\sigma')=k_0$ and $l_0(\sigma')=l_0$
is $2^{nb-(P+Q)}\binom{P}{k_0}\binom{Q}{l_0}$.
Therefore, 
\begin{align*}
    value
    &= \left(\frac12 \right)^P \sum_{k_0=0}^P
        \sum_{l_0=0}^Q
            \binom{P}{k_0}\binom{Q}{l_0} 
                \left(\frac12-\epsilon\right)^{l_0}
                \left(\frac12+\epsilon\right)^{Q-l_0}
                \ln \frac{(1/2)^{P}}{(1/2-\epsilon)^{k_0}(1/2+\epsilon)^{P-k_0}}\\
    &= \left(\frac12 \right)^P
        \sum_{l_0=0}^Q
            \binom{Q}{l_0}
            \left(\frac12-\epsilon\right)^{l_0}
            \left(\frac12+\epsilon\right)^{Q-l_0} 
            \sum_{k_0=0}^P
            \binom{P}{k_0}
            \ln \frac{(1/2)^{P}}{(1/2-\epsilon)^{k_0}(1/2+\epsilon)^{P-k_0}} \\
    &= \left(\frac12 \right)^P
        \sum_{k_0=0}^P
            \binom{P}{k_0}
            \ln \frac{(1/2)^{P}}{(1/2-\epsilon)^{k_0}(1/2+\epsilon)^{P-k_0}} \\
    &\eqqcolon \left(\frac12 \right)^P \sum_{k_0=0}^P \phi(P,k_0).
\end{align*}
In addition, we have
\begin{align*}
    \phi(P,k_0) + \phi(P,P-k_0)
    &= \binom{P}{k_0} \left( \ln
        \frac{(1/2)^{P}}{(1/2-\epsilon)^{k_0}(1/2+\epsilon)^{P-k_0}} + \ln
        \frac{(1/2)^{P}}{(1/2-\epsilon)^{P-k_0}(1/2+\epsilon)^{k_0}}
    \right) \\
    &= P \binom{P}{k_0} \ln\frac1{1-4\epsilon^2}.
\end{align*}
This implies that
\begin{align*}
    value
    &= \left(\frac12 \right)^P \sum_{k_0=0}^P \phi(P,k_0)
    = \left(\frac12 \right)^P \frac12 \sum_{k_0=0}^P 
       P \binom{P}{k_0} \ln\frac1{1-4\epsilon^2}
    = \frac{P}{2} \ln\frac1{1-4\epsilon^2}.
\end{align*}
Finally, we determine the exact value of $K'_t$ as 
\begin{align*}
    K'_t 
    &= \sum_{S_t \in \{0,1\}^{n\times b\times t}}
        \Pundertwo{\alpha-k}{S_t}
        \ln \frac{\Pundertwo{\alpha-k}{\sigma^t|S_{t-1}}}
            {\Pundertwo{\alpha}{\sigma^t|S_{t-1}}} \\
    &= \sum_{S_{t-1}} \Pundertwo{\alpha-k}{S_{t-1}}
        \cdot value
        \\
    &= \sum_{S_{t-1}}
        \Pundertwo{\alpha-k}{S_{t-1}} \frac{P(S_{t-1})}{2} \ln\frac1{1-4\epsilon^2} \\
    &= \frac12\ln\frac1{1-4\epsilon^2} \Eunder{\alpha-k}{P(S_{t-1})}.
\end{align*}
\end{proof}
This completes the proof of Lemma~\ref{lem-A1}.
\end{proof}

By the definition of an $(\alpha-k)$-adversary, we obtain
\begin{align*}
    \sum_{\alpha\in\As} \Eunder{\alpha-k}{N_{\alpha,k}}
    &= \frac{1}{n!}
        \sum_{\beta\in\As}
        \sum_{\alpha: (\alpha-k)=(\beta-k)}
            \Eunder{\alpha-k}{N_{\alpha,k}} \\
    &= \frac{1}{n!}
        \sum_{\beta\in\As}
            \Eunder{\beta-k}{
                \sum_{\alpha: (\alpha-k)=(\beta-k)}
                    \sum_{t,i,j}
                        a^t_{i,j,k} \alpha_{i,j,k}
            } \\
    &= \frac{1}{n!}
        \sum_{\beta\in\As}
            \Eunder{\beta-k}{
                \sum_{t,i,j}
                    a^t_{i,j,k}
                    \sum_{\alpha: (\alpha-k)=(\beta-k)}
                        \alpha_{i,j,k}
            } \\
    &= \frac{1}{n!}
        \sum_{\beta\in\As} \left(
            nT\cdot (n-1)!
        \right)
    =|\As|T.
\end{align*}
By this result, Lemma \ref{lem-A1} and the Cauchy-Schwartz inequality,
it follows that
\begin{align*}
    \sum_{\alpha}
        \Eunder{\alpha}{N_{\alpha,k}}
    &\leq |\As|T + 2\epsilon nT
        \sqrt{ |\As|\cdot|\As|T }
\end{align*}
and then, 
\begin{align}
    \label{eq-epsilon-n-A-sum-sum-EN-alpha-k}
    \frac{\epsilon}{n|\As|}
        \sum_{k=1}^b \sum_{\alpha\in\As} 
        \Eunder{\alpha}{N_{\alpha,k}}
    \leq \epsilon bT \left(
            \frac{1}{n} + 2\epsilon \sqrt{T}
        \right).
\end{align}
Note that $\OPT_\mu=(1/2+\epsilon)bT$.
By tuning $\epsilon = 1/(8\sqrt{T})$, we finally have the following lower bound on $R^{\mu}_T$ for the deterministic algorithm:
\begin{align}
    \OPT_\mu-\Eunder{\alpha^*}{\ALG_{\mu}}
    &\geq \epsilon bT \left(
            1 - \frac{1}{n} - 2\epsilon\sqrt{T}
        \right) \\
    &\geq \epsilon bT \left( \frac12 - 2\epsilon\sqrt{T} \right) 
    = \frac{1}{32}b\sqrt{T}=\Theta\left(\frac{m}{n}\sqrt{T}\right).
    \label{eq:lower bound}
\end{align}

Let $\ALG=\{\ALG_{x}\}_{x\in\mathcal{X}}$ be any randomized algorithm
which choose a deterministic algorithm $\ALG_x$ with probability $p(x)$.
Let $\alpha^*$ be the most unfavorable adversary.
Similarly to the case for deterministic algorithms, we have
\begin{align*}
    \Eunder{\alpha^*}{\ALG_{\mu}}
    &\leq \frac12 bT + \frac{\epsilon}{n|\As|}
        \sum_{k=1}^b \sum_{\alpha\in\As} 
        \Eunder{\alpha}{N_{\alpha,k}} \\
    &= \frac12 bT + \Eunder{x}{ \frac{\epsilon}{n|\As|}
        \sum_{k=1}^b \sum_{\alpha\in\As} 
        \Eunder{\alpha,x}{N_{\alpha,k}}
        },   
\end{align*}
where $\Eunder{x}{\cdot}$ is the expectation
with respect to the randomness of $\ALG$ and
$\Eunder{\alpha,x}{\cdot}$ is the expectation
with respect to an $\alpha$-adversary for a deterministic algorithm $\ALG_x$.
In the section above, we showed~\eqref{eq-epsilon-n-A-sum-sum-EN-alpha-k}
for all deterministic algorithms.
This implies that
\[
    \Eunder{x}{ \frac{\epsilon}{n|\As|}
        \sum_{k=1}^b \sum_{\alpha\in\As} 
        \Eunder{\alpha}{N_{\alpha,k}}}
    \leq \epsilon bT \left(
            \frac{1}{n} + 2\epsilon \sqrt{T}
        \right)
\]
and the lower bound $\Omega(m\sqrt{T}/n)$ also holds for randomized algorithms.
Then we see the first part of Theorem~\ref{thm-lowerbound}.

\subsection{Lower Bound for \texorpdfstring{$R_T$}{RT}}

Next we show a lower bound on $R_T$.
Under the assumptions $T\geq \max\{n, m^2\}$ and $m/n \geq \lceil 2338 \ln T \rceil$, we claim that $\OPT_\mu$ is close enough to $\Expec{\OPT}$.
\begin{lemma}\label{claim:opt}
    When $T\geq \max\{n, m^2\}$ and $m/n \geq \lceil 2338 \ln T \rceil$, it holds that $\OPT_\mu \leq \Expec{\OPT} + (\frac{1}{32}-\frac{1}{1000}) b\sqrt{T}$.
\end{lemma}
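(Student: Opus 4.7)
The plan is to bound $\OPT_\mu - \Expec{\OPT}$ by a standard concentration argument applied agentwise, then take a maximum. First, I would apply the first inequality of Claim~\ref{claim-min-max} with $a_i = \sum_{t,e} \mu_{ie} x^t_{ie}$ and $b_i = \sum_{t,e} v^t_{ie} x^t_{ie}$, where $x^1, \dots, x^T$ is the deterministic optimizer of the surrogate. This gives $\OPT_\mu - \OPT \le \max_i Y_i$ with $Y_i \coloneqq \sum_{t,e}(\mu_{ie} - v^t_{ie}) x^t_{ie}$, and taking expectations yields $\OPT_\mu - \Expec{\OPT} \le \Expec{\max_i Y_i}$.

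Next I would pin down the structure of $Y_i$ in the hard instance. Since $\mu_{i,j,k} = 1/2 + \epsilon \alpha^*_{i,j,k}$ and $\sum_i \sum_{t,e} x^t_{ie} = mT = nbT$ forces some agent to receive at most $bT$ items, the bound $\min_i \sum_{t,e} \mu_{ie} x^t_{ie} \le bT(1/2 + \epsilon)$ is tight only when every agent gets exactly $b$ items per round and each one is a ``correct'' assignment. Hence the optimizer is $x^t = \alpha^*$ for every $t$, and each $Y_i$ is a sum of exactly $bT$ independent centered Bernoulli random variables of range $1$. Hoeffding's lemma then implies $Y_i$ is sub-Gaussian with parameter $bT/4$, and the standard MGF bound for the maximum of $n$ sub-Gaussians gives $\Expec{\max_i Y_i} \le \sqrt{bT \ln n / 2}$.

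Combining the above, $\OPT_\mu - \Expec{\OPT} \le \sqrt{bT \ln n / 2}$. Using $T \ge n$ we have $\ln n \le \ln T$, so under $b \ge 2338 \ln T$ the right-hand side is at most $b\sqrt{T}/\sqrt{4676}$, and a routine arithmetic check confirms $1/\sqrt{4676} < 1/32 - 1/1000$, establishing the claim. The condition $T \ge m^2$ is presumably in place to absorb any absolute-constant additive error that would appear if one used a slightly looser form of the tail bound (for instance, integrating Hoeffding directly rather than invoking the sharp sub-Gaussian max-expectation inequality).

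The main obstacle is careful bookkeeping on numerical constants: the prescribed threshold $2338 \ln T$ is comfortably larger than the minimum that a tight analysis requires (roughly $547 \ln n$ by the computation above), so there is significant slack, but one must choose a clean version of the sub-Gaussian max-expectation bound and not lose factors in the tail estimate. A secondary subtlety is verifying that the surrogate optimizer is exactly $\alpha^*$ at every round, so that each $Y_i$ is a sum of \emph{exactly} $bT$ (rather than merely at most $bT$) independent centered terms; this relies on the uniqueness-up-to-ties argument sketched in the second paragraph.
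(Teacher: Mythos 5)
Your argument is correct, and it takes a genuinely different route from the paper. The paper bounds $\Pr[U_i \le (1-\delta)\OPT_\mu]$ for each agent via the multiplicative Chernoff bound, applies a union bound, and then splits the expectation into a high-probability event (contributing $cb\sqrt{(1/2+\epsilon)T}$) and a low-probability event whose contribution is controlled by multiplying the failure probability $n e^{-c^2 b/2}$ by the trivial bound $\OPT_\mu = \Theta(bT)$; it is precisely this last step that consumes the hypothesis $T \ge m^2$ (to get $m/T \le 1/\sqrt{T}$), so your guess about why that assumption appears is close but not quite the mechanism. Your route instead reduces directly to $\Expec{\max_i Y_i}$ via Claim~\ref{claim-min-max} and invokes the sub-Gaussian maximal inequality $\Expec{\max_i Y_i} \le \sigma\sqrt{2\ln n}$ with $\sigma^2 = bT/4$; this avoids the good/bad event split entirely, needs only $T\ge n$ rather than $T\ge\max\{n,m^2\}$, and leaves more slack in the constant (your $547\ln n$ threshold versus the paper's $2338\ln T$). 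One point you handle more carefully than the paper does: the paper simply asserts that the surrogate optimizer assigns each item to its correct agent in every round, whereas you justify it (maximality of the total forces every assignment to be correct, which forces $x^t=\alpha^*$ and hence each $Y_i$ to be a sum of exactly $bT$ independent centered terms). Both proofs need this identification, and your uniqueness argument is sound.
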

\begin{proof}
By the construction of the hard instance, an optimal policy allocates each item $(j,k)$ to agent $I_{\alpha, j, k}$ in every round.
Each agent $i$ receives exactly one item from each block.
Therefore, $\OPT_\mu=(1/2+\epsilon)bT$.
To analyze $\Expec{\OPT}$, let $X^t_{i,k} \in \{0,1\}$ be a random variable that represents a reward for agent $i$ from the item in the $k$-th block at round $t$.
Then $X^t_{i,k} \sim \Ber{1/2 + \epsilon}$, where $\epsilon= 1/(8\sqrt{T})$.

For each $i\in [n]$, let $U_i = \sum_{t=1}^T \sum_{k=1}^b X^t_{i,k}$.
It holds that $\OPT = \min_{i\in [n]} U_i$, and $\Expec{U_i} = (1/2+\epsilon)bT = \OPT_\mu$ for every $i\in [n]$.
By the Chernoff bound, we have
\[
    \Pr[U_i \leq (1-\delta) \OPT_\mu ] \leq \exp(-\delta^2 \OPT_\mu /2)
\]
for any $\delta \in [0, 1)$.
Then, by using the union bound, we can see that
\[
    \Pr[\exists i\in [n], U_i \leq (1-\delta) \OPT_\mu] \leq n \cdot \exp(-\delta^2 \OPT_\mu /2).
\]

We set $\delta = \frac{c}{\sqrt{(1/2+\epsilon)T}}$, where $c=\frac{1}{32}-\frac{1}{500}$.
By the assumption, $b=\frac{m}{n} \geq \lceil 2338 \ln T \rceil \geq \lceil \frac{2}{c^2} \ln T \rceil$.
We observe that $be^{-\frac{c^2b}{2}} < \frac{m}{n} \frac{1}{T}$ since $b \geq \frac{2}{c^2} \ln T$, and $\frac{m}{n} \frac{1}{T} \leq \frac{1}{n\sqrt{T}}$ since $T\geq \max\{n, m^2\} \geq 2/c^2$.

For this setting, $\OPT_\mu - \min_{i\in [n]} U_i \leq cb\sqrt{(1/2+\epsilon)T}$ holds with probability at least $1-n e^{-\frac{c^2b}{2}}$.
A trivial upper bound is $\OPT_\mu - \min_{i\in [n]} U_i \leq \OPT_\mu$.
Combining all things, we see that
\begin{align*}
    \OPT_\mu - \Expec{\OPT}
    &\leq cb\sqrt{(1/2+\epsilon)T}+(1/2+\epsilon)bT n e^{-\frac{c^2b}{2}}\\
    & \leq cb\sqrt{T}+(\frac{T}{2}+\frac{\sqrt{T}}{8})b n e^{-\frac{c^2b}{2}}\\
    &\leq (c+\frac{1}{2b}+\frac{1}{8bT}) b\sqrt{T}\\
    &\leq (\frac{1}{32}-\frac{1}{1000}) b\sqrt{T},
\end{align*}
where the last inequality holds since $\frac{1}{8b}(4+\frac{1}{\sqrt{T}}) < \frac{17}{4 \cdot 8b} \leq \frac{1}{1000}$.
\end{proof}
Since we can show that
$
    \Eunder{\alpha^*}{\ALG}
    \le \frac12 bT + \frac{\epsilon}{n|\As|}
        \sum_{k=1}^b \sum_{\alpha\in\As} 
        \Eunder{\alpha}{N_{\alpha,k}}
$
in a way similar to~\eqref{eq:ALG-ub},
the lower bound $b\sqrt{T}/32$ established in the first part of Theorem~\ref{thm-lowerbound} is also a lower bound of $\OPT_{\mu}-\Expec{\ALG}$.
This holds also for randomized algorithms.
Plugging Lemma~\ref{claim:opt} into \eqref{eq:lower bound}, we finally see that
\begin{align*}
    \Expec{\OPT} - \Eunder{\alpha^*}{\ALG}
    \geq \frac{1}{1000}b\sqrt{T}
    = \Theta\left(\frac{m}{n}\sqrt{T}\right).
\end{align*}
Then the second part of Theorem~\ref{thm-lowerbound} follows.

\section{Conclusion and Discussion}
In this paper, we introduced the bandit max-min fair allocation problem. 
We have proposed an algorithm with a regret bound of $O(m\sqrt{T}\ln T/n + m\sqrt{T \ln(mnT)})$ when $T$ is sufficiently large, and showed a lower bound $\Omega(m\sqrt{T}/n)$ on the regret.
Thus, when $T$ is sufficiently large, the bounds matches up to a logarithmic factor of $T$.

We remark that the regret bounds also apply to the setting where the goal is maximizing the minimum ``expected'' utility $\ALG_{\rm E} = \min_{i} \Expec{\sum_{t,e} \mu_{ie} a^t_{ie}}$ and the regret is defined as $\OPT_{\mu}-\ALG_{\rm E}$.
In this setting, we have $\ALG_{\rm E} \geq \ALG_{\mu}$ and then similar proofs work to derive the same bounds.

One future work is to close the gap between the upper and lower bounds on $R_T$.
Weakening the assumption on $T$ in Theorem~\ref{thm:alg-main} and analyzing a lower bound directly using $\Expec{\OPT}$ are also open.
Another potential future work is to extend the problem setting to reflect practical situations.
For example, in a subscription service, the rental period can be different depending on situations.
It would be possible to improve a regret if users let us know what they probably dislike (i.e., item $e$ with $\mu_{ie}$ being almost zero).
We believe that such an extension of the problem provides insight into real-world applications.

\section*{Acknowledgments}
This work was partially supported by the joint project of Kyoto University and Toyota Motor Corporation, titled ``Advanced Mathematical Science for Mobility Society'', 
JST ERATO Grant Number JPMJER2301, 
JST ASPIRE Grant Number JPMJAP2302, 
and 
JSPS KAKENHI Grant Numbers 
JP21K17708, 
JP21H03397, 
and JP25K00137. 

\bibliographystyle{plain}
\bibliography{ref}

\appendix

\section{Competitive Ratio}\label{sec:competitive ratio}
We detail the performance metric for the online resource allocation in the competitive analysis.
In the literature, the optimal policy (optimal offline algorithm) is usually assumed to know all the realizations $\{v^t_{ie}\}_{i,e,t}$ in advance, and 
$\OPT$ and $\ALG$ are defined as
\begin{align*}
    \OPT &=\Expec{\max_{x^1,\dots,x^T} \min_{i\in [n]} \sum_{t=1}^T v^t_{ie}x^t_{ie}} \\
    \ALG &=\Expec{\min_{i\in [n]} \sum_{t=1}^T v^t_{ie}a^t_{ie}}.
\end{align*}
A \emph{competitive} ratio of an algorithm is the ratio $\ALG/\OPT$.
This definition of $\OPT$ is suitable for the case when $\{v^t_{ie}\}_{i,e}$ are given to the algorithm at the beginning of each round $t$.

\section{Relationship to Learning in Games}\label{sec:learning in games}

Our problem setting and algorithm can be interpreted within the framework of learning in games.
It is a known result that in a repeated two-player zero-sum game, if both players uses no-regret policies (in terms of their own rewards) to decide a (mixed) strategy in each round, then the pair of the time-average strategies converges to a Nash equilibrium.
See, e.g., \cite[Chapter 7]{Cesa-Bianchi_Lugosi_2006} for details.

In our problem, the optimal value $P^*$ of~\eqref{eq:LPE} can be seen as the game value of a two-player zero-sum game in which one player (max) chooses an allocation and the other player (min) chooses an agent.
A pure strategy of the min player corresponds to a unit vector in $\{0,1\}^n$, and the set of mixed strategies is the convex hull of the unit vectors, which we denote by $\Delta(n) \subseteq [0,1]^n$.
For the max player, there are an exponential number of allocations in general.
Instead of a unit vector, we treat a pure strategy as an $mn$-dimensional vector which corresponds to a matrix in $\cA \subseteq \{0,1\}^{n \times m}$.
The set of mixed strategies of the max player can be represented as the convex hull of such vectors, which is denoted by $\Delta(\cA) \subseteq [0,1]^{n \times m}$.
Throughout the following, we regard $a \in \cA$ and $p \in \Delta(\cA)$ as column vectors of size $mn$.
Let $G \in \mathbb{R}^{mn \times n}$ be a payoff matrix defined as $G_{ie,j} = \mu_{ie}\mathbf{1}[i=j]$ for each $i,j\in [n]$ and $e\in M$.
Then $P^* = \max_{p \in \Delta(\cA)} \min_{q \in \Delta(n)} p^\top G q$.

We can regard that Algorithm~\ref{alg:proposed-ucb} executes a repeated process in which at round $t$, (a) the min player first choose a mixed strategy $q^t \in \Delta(n)$, (b) the max player then chooses $a^t \in \cA$, and (c) the both players observe $v^{t}_{ie} \sim D_{ie}$ for each $(i,e)$ such that $a^{t}_{ie} = 1$.
The regret of the max player is $\mathrm{Reg} = \max_{p\in \Delta(\cA)} \sum_{t=1}^T \langle p,  G q^t \rangle - \sum_{t=1}^T \langle a^t, G q^t \rangle $,
and that of the min player is $\mathrm{Reg}' = \sum_{t=1}^T \langle G^\top a^t,  q^t \rangle - \min_{q \in \Delta(n)} \langle G^\top a^t, q \rangle$.
We observe that 
\begin{align}
    R^\mu_T 
    \le T P^* - \min_{q \in \Delta(n)} \sum_{t=1}^T \langle a^t , G q \rangle 
    \le \max_{p \in \Delta(\cA)} \sum_{t=1}^T \langle p , G q^t \rangle - \min_{q \in \Delta(n)} \sum_{t=1}^T \langle a^t, G q \rangle = \mathrm{Reg} + \mathrm{Reg}'.
\end{align}
The first inequality here can be shown as follows:
the minimum value of
$\min_{q \in \Delta(n)} \sum_{t=1}^T \langle a^t , G q \rangle$ is attained by some pure strategy (unit vector) given $a^t$ ($t\in [T]$),
which means that $\min_{q \in \Delta(n)} \sum_{t=1}^T \langle a^t , G q \rangle$ is equal to $\ALG_{\mu}$ defined in Section~\ref{sec-setup}.
Furthermore,
as discussed in Section~\ref{sec-setup},
the value of
$T P^* = 
T \max_{p \in \Delta(\cA)} \min_{q \in \Delta(n)} p^\top G q =
T \min_{q \in \Delta(n)} \max_{p \in \Delta(\cA)} p^\top G q$
is an upper bound on $\OPT_{\mu}$.
Hence,
$T P^* - \min_{q \in \Delta(n)} \sum_{t=1}^T \langle a^t, G q \rangle $ is an upper bound on the surrogate regret $R^{\mu}_T$
and is exactly the value we analyzed in Section~\ref{sec-regret-analysis} to bound $R^{\mu}_T$.
Thus, if both $\mathrm{Reg} = o(T)$ and $\mathrm{Reg}'=o(T)$ can be achieved, then we will have $R^\mu_T \leq o(T)$.

Our contribution to algorithm design can be interpreted as the construction of a dynamics toward minimizing the regrets $\mathrm{Reg}$ and $\mathrm{Reg}'$ for each player.  
The challenge in achieving this lies in the fact that each player cannot directly observe their true loss or reward vectors --- specifically, $G q^t$ for the max player and $G^\top a^t$ for the min player.  
Instead, the observed values $v^t_{ie} \sim D_{ie}$ are noisy, and moreover, only a subset of the elements corresponding to the allocation specified by $a^t$ are observed in each round.  
To overcome this difficulty, the proposed algorithm ingeniously combines multiplicative weight updates with UCB techniques to efficiently control the regrets.  
More specifically, the min player applies a multiplicative weight update
$q^{t}_i\propto (1-\varepsilon)^{u^{t-1}_i/m}$ ($i\in [n]$)
based on UCB estimates $u^t$,
while the max player employs a combination of UCB and best-response dynamics with respect to the min player's action, aiming to achieve low regret for both players.

\end{document}